\newtheorem{theorem}{Theorem}
\newtheorem{lemma}{Lemma}
\definecolor{purpleNew}{rgb}{0.710, 0.631, 0.722}
\definecolor{yellowNew}{rgb}{0.996, 0.945, 0.706}
\definecolor{tealNew}{rgb}{0.655, 0.827, 0.820}
\providecommand{\keywords}[1]
{
  \small	
  \textbf{\textit{Keywords---}} #1
}
\title{FoLDTree: A ULDA-Based Decision Tree Framework for Efficient Oblique Splits and Feature Selection}
\author[1]{Siyu Wang}
\author[1]{Kehui Yao}
\affil[1]{Department of Statistics\\ University of Wisconsin - Madison}
\date{} 
\begin{document}

\maketitle
\setlength{\parindent}{0pt} 

\section*{Abstract}

Traditional decision trees are limited by axis-orthogonal splits, which can perform poorly when true decision boundaries are oblique. While oblique decision tree methods address this limitation, they often face high computational costs, difficulties with multi-class classification, and a lack of effective feature selection. In this paper, we introduce LDATree and FoLDTree, two novel frameworks that integrate Uncorrelated Linear Discriminant Analysis (ULDA) and Forward ULDA into a decision tree structure. These methods enable efficient oblique splits, handle missing values, support feature selection, and provide both class labels and probabilities as model outputs. Through evaluations on simulated and real-world datasets, LDATree and FoLDTree consistently outperform axis-orthogonal and other oblique decision tree methods, achieving accuracy levels comparable to the random forest. The results highlight the potential of these frameworks as robust alternatives to traditional single-tree methods.

\keywords{Oblique decision trees, Linear Discriminant Analysis (LDA), Machine Learning, Feature Selection, Classification}

\section{Introduction}
\label{sec:Introduction}

Linear discriminant analysis (LDA), which assumes Gaussian densities on the covariates, is a powerful linear method for classification problems. LDA aims to find linear combinations of features that can best separate the groups by maximizing the between-group and within-group variance ratio. Another popular classifier with a linear decision boundary is the decision tree, which recursively partitions the sample space into rectangular regions.\\

One of the biggest problems of the tree-based method is that it usually cuts in an axis-orthogonal direction. For numerical covariates, the (univariate) splitting rule for each non-terminal node has the form $x_i \le a$. For categorical covariates, the splitting rule can be written as $x_i \in S$, where $S \subsetneq \{ c_{1}, c_{2}, \dots, c_{k}\}$ and $\{ c_{1}, c_{2}, \dots, c_{k}\}$ are the full set of levels of the variable $X_i$. The decision tree loses its effectiveness when the real decision boundary is not orthogonal to the axes. e.g., Figure~\ref{Figure:DToblique}(a). The decision tree has 10 splits and uses a staircase function to approximate the boundary line. Additionally, this approximation's accuracy decreases as the dimensions increase, e.g., Figure~\ref{Figure:DToblique}(b). The gap between the true boundary and the fitted boundary from the decision tree becomes larger. LDA, which can directly fit a hyperplane decision boundary in the high-dimension space, is more suitable for this problem.\\

\begin{figure}[htbp]
    \centering
    \begin{subfigure}[b]{0.45\textwidth}
        \centering
        \includegraphics[width=\textwidth]{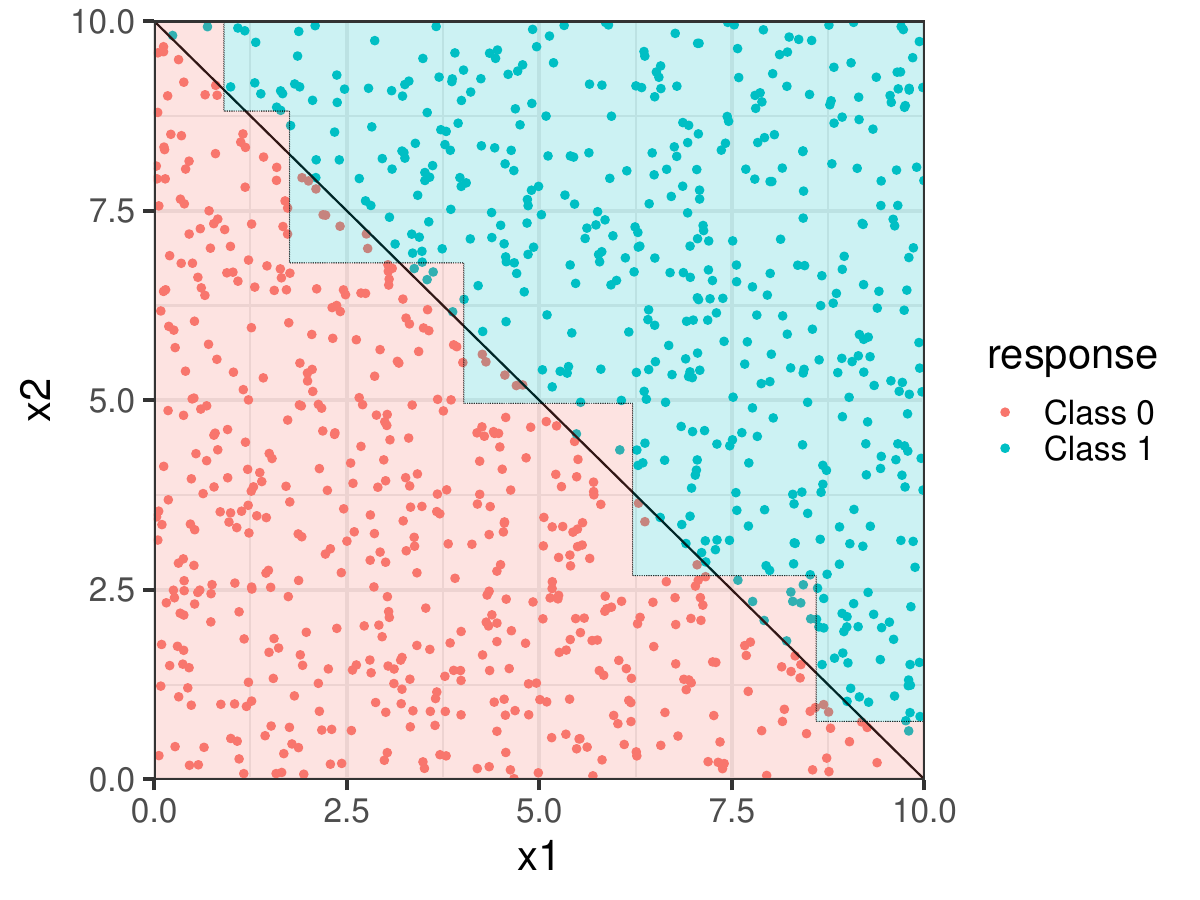}
        \caption{2D decision boundary}
    \end{subfigure}%
    \hfill
    \begin{subfigure}[b]{0.5\textwidth}
        \centering
        \includegraphics[width=\textwidth]{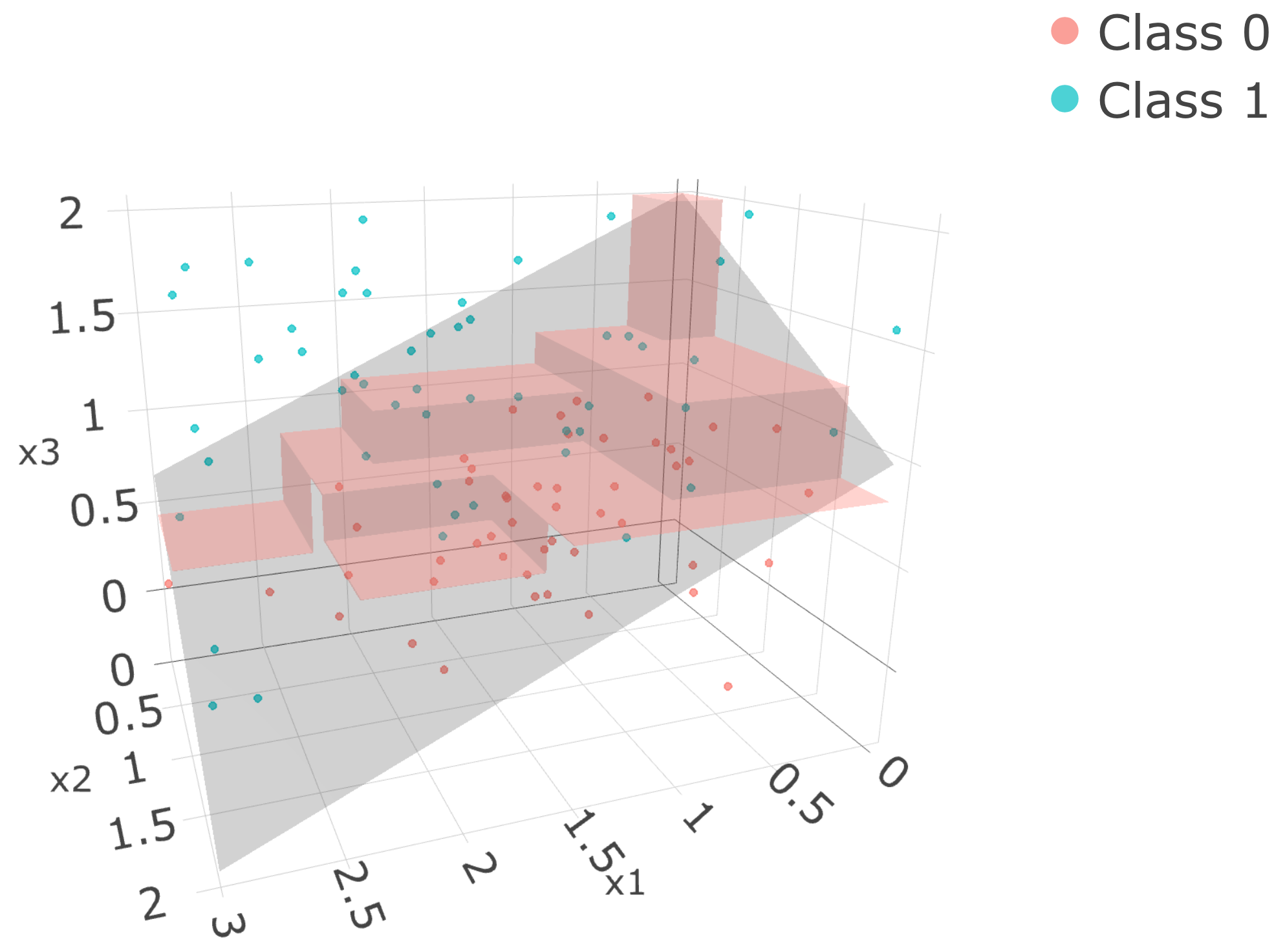}
        \caption{3D decision boundary}
    \end{subfigure}
    \caption{Decision boundaries from the decision tree (\texttt{rpart} in R) when the decision boundary is not orthogonal to the axes (Section~\ref{sec:Introduction}). (a) 2D decision boundary with the actual boundary represented by a straight line, and the staircase boundary fitted by \texttt{rpart}. The colored background represents the predicted regions. (b) 3D decision boundary, where the gray surface represents the actual boundary, and the red step-function surface is fitted by \texttt{rpart}}
    \label{Figure:DToblique}
\end{figure}

On the other hand, LDA uses the Gaussian density assumption, and the decision boundary for each class is usually a polygon. Therefore, LDA's performance may decline when the class centroids are closely situated together or when symmetry is present, e.g., Figure~\ref{Figure:LDAsymmetry}(a). By first applying tree-based methods to split the data on the x variable, the class centroids can be separated and the symmetry destroyed. This enables LDA to carry out its classification task more effectively (Figure~\ref{Figure:LDAsymmetry}(b)).\\

\begin{figure}[htbp]
    \centering
    \begin{subfigure}[b]{0.5\textwidth}
        \centering
        \includegraphics[width=\textwidth]{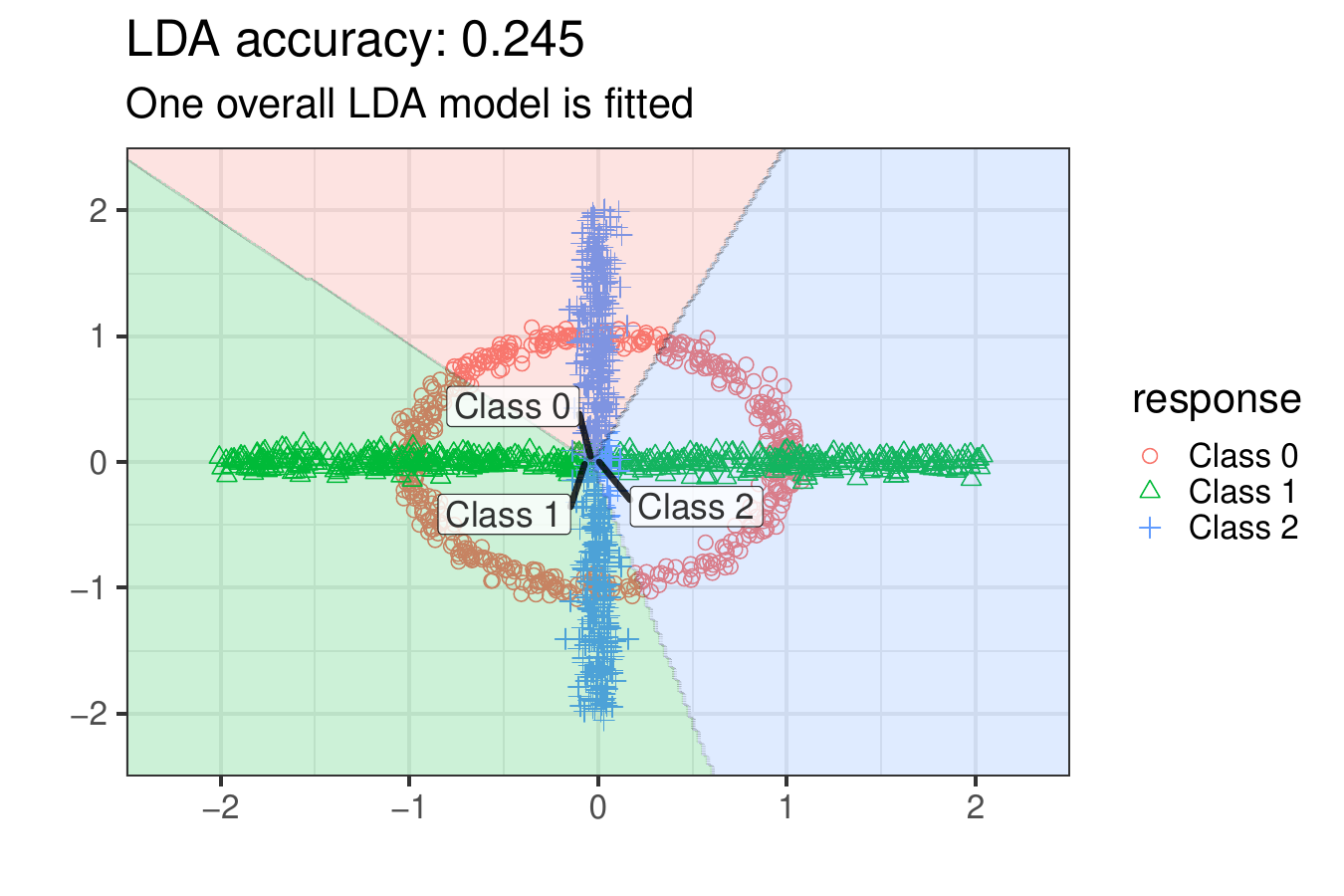}
        \caption{LDA before splitting}
    \end{subfigure}%
    \hfill
    \begin{subfigure}[b]{0.5\textwidth}
        \centering
        \includegraphics[width=\textwidth]{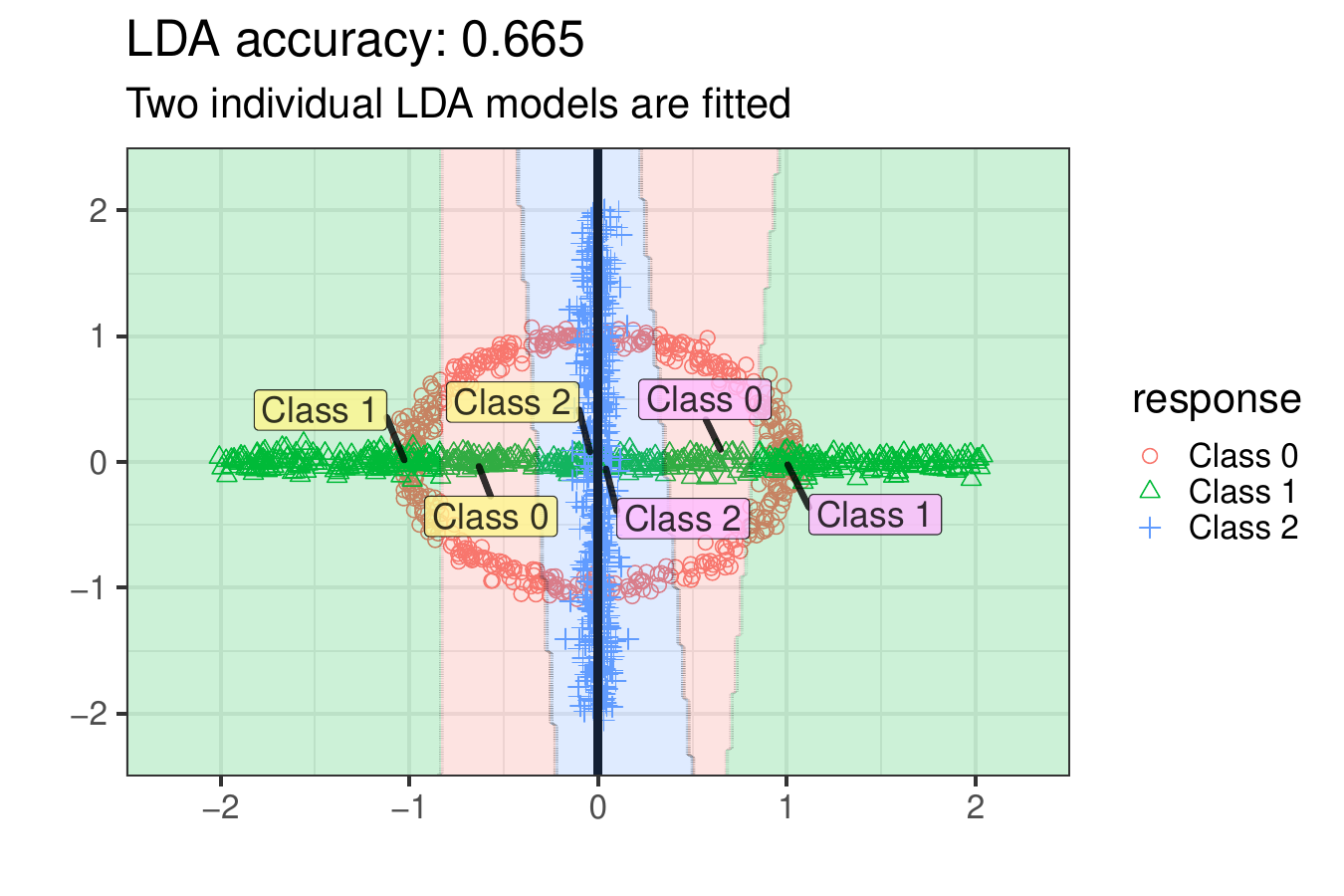}
        \caption{LDA after splitting}
    \end{subfigure}
    \caption{Decision boundaries from LDA. (Section~\ref{sec:Introduction}). The colored background represents LDA prediction regions, and the labels indicate the class centroids. (a) LDA decision boundary before splitting. (b) LDA decision boundary after splitting on $x \leq 0$}
    \label{Figure:LDAsymmetry}
\end{figure}

Another problem is missing values. LDA's decision boundary is based on linear combinations of covariates, and missing values in any covariate will make the boundary uncomputable. In this paper, we propose decision tree frameworks based on recursive LDA that can handle missing values and high-dimensional data. This paper is organized as follows: Section \ref{sec:relatedWork} provides a literature review of decision trees with oblique splits, highlighting associated challenges. Section \ref{sec:algorithm} introduces our proposed framework, detailing the splitting rules, node models, stopping rules, and approach to handling missing values. Section \ref{sec:Empirical} presents analyses based on synthetic and real datasets, along with use cases of our proposed methods. We conclude in Section \ref{sec:Conclusion}.

\section{Related Work}
\label{sec:relatedWork}

The way we combine LDA and the decision tree is by keeping the decision tree structure but using LDA to determine the splits. Let $\mathbf{X} =\left(\mathbf{X}_1, \mathbf{X}_2, \ldots, \mathbf{X}_p\right)$ represent a feature vector in a $p$-dimensional space, where each $X_i$ is a numerical feature. An oblique split based on a linear combination of multiple features is represented by:

\begin{equation}
	w_1 X_1+w_2 X_2+\cdots+w_p X_p \le t
\end{equation}

where:
\begin{itemize}
\item $\mathbf{w}=\left(w_1, w_2, \ldots, w_p\right)$ are the coefficients that define the orientation of the hyperplane.
\item $t$ is a threshold that defines the position of the hyperplane.
\end{itemize}

Several methods in the literature also use oblique splits within the decision tree structure. Broadly, these methods can be categorized into the three groups below.

\subsection{Greedy Search Over All Possible Linear Combinations}
\label{subsec:method1Previous}

This family of methods first initializes the parameters $\mathbf{w}$ and $t$ randomly, then uses optimization techniques to update the parameters until the predefined loss function is minimized. These methods typically differ in their optimization approaches.\\

CART-LC \cite{breiman1984classification} uses a deterministic hill-climbing algorithm, which may lead to a local minimum. DT-SE \cite{john1995robust}, which uses soft entropy as the loss function, faces a similar issue. To mitigate this, SADT \cite{heath1993induction} uses simulated annealing, but the added randomization significantly reduces computational efficiency. OC1 \cite{murthy1994system} combines deterministic hill-climbing with randomization, starting with a deterministic search to reach a local minimum, followed by randomization to escape it.\\
 
In practice, the computational burden of this approach is so high that it typically limits the search to a restricted parameter space, leading to compromised performance.

\subsection{Search for Axis-Orthogonal Split in the Transformed Space}

The computational burden of the first family of methods comes from its unrestricted parameter space, which leads us to the second family, where the parameter space is more restricted. These methods first generate effective linear combinations, and then find the best univariate cut among these combinations. \\

LDCT \cite{todeschini1992linear}, Linear Tree \cite{gama1999linear}, and FDT \cite{lopez2013fisher} use LDA to transform the data into linear discriminant scores and then find the best cut among these scores. LDTS \cite{li2003multivariate} also uses LDA, but instead of fitting one LDA model with all variables, it fits an LDA for each possible subset of variables. Given $M$ available variables, LDTS fits $2^M - 1$ LDA models and uses Tabu search to find the best LD score to cut. PPtree \cite{lee2013pptree} offers three data transformation methods, including LDA, $L_r$ \cite{lee2005projection}, and PDA \cite{lee2010projection}. HHCART \cite{wickramarachchi2016hhcart}, on the other hand, uses Householder reflection based on the orientation of the dominant eigenvector of the covariance matrix, while HHCART(G) \cite{wickramarachchi2019reflected} applies Householder reflection to a modified Geometric Decision Tree (GDT; \cite{manwani2011geometric}) angle bisector.\\

This family relies on the effectiveness of the transformation method. Most methods in this group still require a certain amount of search and some methods cannot naturally handle multi-class classification.

\subsection{Directly Adopting the Decision Boundary from an External Classifier}

This family of methods is straightforward; it first fits an external linear classifier and then directly uses its decision boundary as the splitting criterion.\\

FACT \cite{loh1988tree} uses the combination of Principal Component Analysis (PCA) and LDA as the classifier. QUEST \cite{loh1997split} uses Quadratic Discriminant Analysis (QDA), GUIDE \cite{loh2009improving} has the option to use the best bivariate LDA and Stree \cite{montanana2021stree} uses the Support Vector Machine (SVM).\\

This family relies on the effectiveness of the external classifier even more. However, no search is required.

\section{The Proposed Algorithm}
\label{sec:algorithm}

Our proposed method belongs to the third family and makes full use of LDA's effectiveness. It handles missing values, includes a built-in variable selection framework, and can output both predicted classes and posterior probabilities.

\subsection{From LDA to ULDA and Forward ULDA}

In traditional LDA, the goal is to find linear projections that best separate classes. These decision boundaries are found by maximizing the ratio of between-class scatter matrix to within-class scatter matrix, similar to optimizing a signal-to-noise ratio. Fisher's criterion aims to find transformation vectors $\mathbf{w}_{M \times 1}$ that maximizes the ratio:

\begin{equation}
\arg \max _{\mathbf{w}} \frac{\mathbf{w}^T \mathbf{S}_B \mathbf{w}}{\mathbf{w}^T \mathbf{S}_W \mathbf{w}}
\end{equation}

where $\mathbf{S}_B$ is the between-class scatter matrix, and $\mathbf{S}_W$ is the within-class scatter matrix. The optimal $\mathbf{W}$ is derived by solving an eigenvalue problem involving $\mathbf{S}_W^{-1}\mathbf{S}_B$. The resulting $\mathbf{W}$ projects the original data $\mathbf{X}$ into orthogonal linear discriminant scores $\mathbf{X}\mathbf{w}_i$, ranked in descending order of their signal-to-noise ratios.\\

However, challenges arise when the within-class scatter matrix, $\mathbf{S}_W$, is not invertible, such as when there are more variables than observations or when variables are linearly dependent. This issue has been widely addressed in the literature with several solutions proposed \cite{mai2013review, tharwat2017linear}. Among these solutions, ULDA \cite{ye2005characterization, howland2003structure} is a promising candidate for integration into our decision tree framework. ULDA uses the following criterion:

\begin{equation}
		\mathbf{W}=\arg \max _{\mathbf{W}^T \mathbf{S}_T \mathbf{W}=I} \operatorname{trace}\left(\left(\mathbf{W}^T \mathbf{S}_T \mathbf{W}\right)^{+}\left(\mathbf{W}^T \mathbf{S}_B \mathbf{W}\right)\right),
		\label{eq:ULDA}
\end{equation}

where $\left(\mathbf{W}^T \mathbf{S}_T \mathbf{W}\right)^{+}$ denotes the Moore-Penrose inverse of $\mathbf{W}^T \mathbf{S}_T \mathbf{W}$, and $\mathbf{S}_T$ is the total scatter matrix. This ensures that the ULDA solution always exists, and \cite{ye2004feature} shows that ULDA is equivalent to classical LDA when $\mathbf{S}_T$ is nonsingular. Unlike Regularized LDA \cite{friedman1989regularized} or Penalized LDA \cite{witten2011penalized}, ULDA does not require hyperparameter tuning. This enables us to fit multiple LDA models within the tree structure without concerns about computational efficiency.\\

By default, LDA (or ULDA) uses all variables to find the transformation matrix $\mathbf{W}$, making it prone to overfitting. Most decision trees using LDA models overlook this potential pitfall. LDTS mitigates this by testing all possible subsets, resulting in another significant computational burden. We need a more efficient way to rank the variables and select the appropriate number of variables for inclusion in LDA, making stepwise LDA a promising solution. By using stepwise LDA, we eliminate noise variables, enhance interpretability, and achieve better computational efficiency with a smaller set of variables. However, traditional stepwise LDA relies on Wilks' $\Lambda$ and is less effective when $\mathbf{S}_W$ becomes singular, as mentioned in \cite{wang2024newforwarddiscriminantanalysis}. Therefore, we adopt the same approach and use forward ULDA as our solution to the overfitting problem in ULDA. In the remainder of this paper, LDA refers to either ULDA or forward ULDA.

\subsection{Splitting Rule}
\label{subsec:splitting}

We propose two methods in this paper, LDATree and FoLDTree. LDATree uses ULDA as the splitting rule and FoLDTree uses forward ULDA as the splitting rule. Within each node, we first fit an LDA model using the data in that node. Then, we create $ J^{\prime} $ subnodes, where $ J^{\prime} $ represents the number of distinct predicted classes. If $J^{\prime} = 1$, the splitting is stopped. Otherwise, each observation is distributed to a branch along with others in the same predicted class. Occasionally, we encounter $ J^{\prime} < J $, indicating that some classes are not predicted (hidden) in this training set, yet these classes might appear in predictions of the testing set. If that happens, we assign the observation to the class with the highest posterior probability among the non-hidden classes.\\

Some methods from Section~\ref{sec:relatedWork} cannot naturally handle scenarios where the response variable has more than two classes. They first combine the classes into two superclasses and then solve it as a binary classification problem. In contrast, LDA can naturally handle multi-class problems, and we inherit this advantage in our framework, allowing it to produce a decision tree with multi-way splits in such cases.\\

One advantage of LDA is that its decision boundary is derived from the Gaussian assumption with equal covariance across different classes. Since it relies on the distribution rather than individual data points, it is less prone to overfitting and more computationally efficient than methods like SVM, which depends on points near the decision boundary. However, this robustness can sometimes lead to blind spots. Next, we will illustrate how we make LDA splitting more effective and prevent premature stopping when a dominant class is present.\\

Figure~\ref{Figure:splitDom}(a) illustrates a simulated scenario where class $A$ is the dominant class (80.9\%). If we use the estimated prior by default, LDA predicts all observations as class $A$, and the posterior probabilities are shown in Figure~\ref{Figure:splitDom}(b), where the density of class $A$ dominates that of class $B$ everywhere. While predicting everyone as class $A$ makes LDA a good classifier, it does not make it an effective splitting tool. A good splitting tool in this scenario will continue splitting until class $A$ and class $B$ are separated. Initially, the density plots of class $A$ and class $B$ have the same spread due to the constant variance assumption in LDA. However, the unequal prior lowers the density of class $B$. To address this, we assign equal priors to both classes.\\

\begin{figure}[htbp]
    \centering
    \begin{subfigure}[b]{0.5\textwidth}
        \centering
        \includegraphics[width=\textwidth]{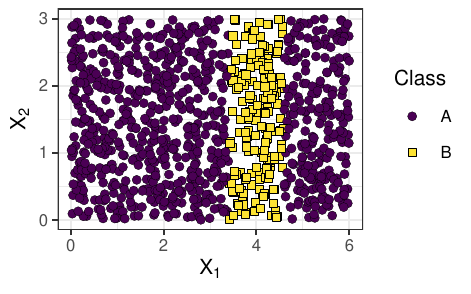}
        \caption{Scatter plot}
    \end{subfigure}%
    \hfill
    \begin{subfigure}[b]{0.5\textwidth}
        \centering
        \includegraphics[width=\textwidth]{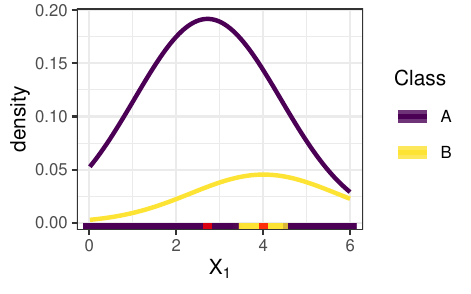}
        \caption{Density plot based on LDA's assumption}
    \end{subfigure}
        \begin{subfigure}[b]{0.5\textwidth}
        \centering
        \includegraphics[width=\textwidth]{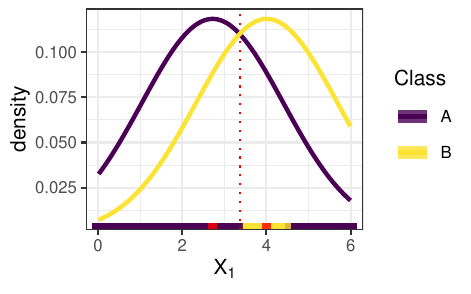}
        \caption{Density plot based on LDA's assumption with equal priors}
    \end{subfigure}%
    \hfill
    \begin{subfigure}[b]{0.5\textwidth}
        \centering
        \includegraphics[width=\textwidth]{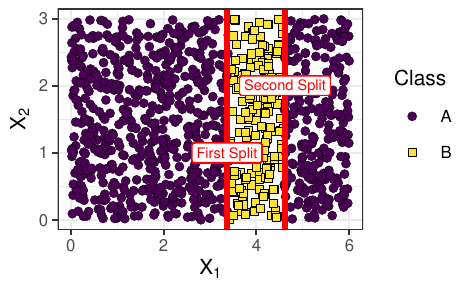}
        \caption{First two splits in LDATree}
    \end{subfigure}
	\caption{Simulated scenario where LDA splitting fails and our approach to address it (Section \ref{subsec:splitting}). (a) Scatter plot showing class $A$ dominating class $B$. (b) Posterior probabilities from LDA, where class $A$'s dominance leads to no splits under the estimated priors. Red dots on the x-axis represent class centers. (c) Posterior probabilities from LDA with equal priors, allowing LDA to split at the intersection of the two densities. (d) First and second splits in LDATree, demonstrating effective class separation}
    \label{Figure:splitDom}
\end{figure}

Figure~\ref{Figure:splitDom}(c) shows the new density plot under equal priors, where LDA can now split at the intersection of the two densities (red dotted line). Figure~\ref{Figure:splitDom}(d) illustrates the first and second splits when fitting the LDATree, demonstrating decent performance. In our implementation, we switch to equal priors when the Gini Index of the predicted classes in a particular node falls within the range $(0, 0.1]$.

\subsection{Node Model}
\label{subsec:nodeModel}

The node model reflects how predictions are made in a terminal node. The simplest node model is the plurality rule, where all observations in the node are predicted as the plurality class. We use ULDA as the node model in LDATree and forward ULDA in FoLDTree. Both models fall back to the plurality rule if the LDA model does not achieve better training accuracy in the current node.\\

There are several reasons to use LDA as the node model:
\begin{enumerate}
	\item It has no additional time cost, as the LDA model is already fitted during the split-finding process. If the LDA split is ineffective or pruned, it serves as the node model.
	\item It provides posterior probabilities rather than just the predicted class. Using LDA as the node model at depth $d$ is similar to splitting the node with LDA at depth $d$ and applying the plurality rule in its child nodes at depth $(d+1)$. However, the plurality rule only outputs the predicted class, whereas LDA, which assumes Gaussian densities, is a likelihood-based classifier. The ability to output posterior probabilities allows for further inferences and enables comparison with results from other likelihood-based classifiers.
	\item It acts as a random split when no effective LDA cut is found. Some splitting rules introduce randomness to reduce greediness and improve overall performance, as seen in \cite{geurts2006extremely}. LDA splits seamlessly combine both greedy and random splitting rules.
\end{enumerate}

\subsection{Stopping Rule}
\label{subsec:stoppingRule}

The stopping rule can be viewed as the model selection tool for the decision tree. If the process of splitting stops too late, then the tree is likely to overfit. If it stops too soon, the tree has not captured the information in the dataset yet. Therefore, it's crucial to establish a rule that stops the splitting process once the splits are ineffective. Here, we proposed a new measure for the strength of splits.\\

Theoretically, the best way to validate a split's performance is by using external datasets like the validation or testing set. However, by splitting the original dataset, the tree will be built on a smaller dataset and might not be as good as if we use the full dataset. Therefore, most well-established tree methods measure the strength of a split by training (or resubstitution) error. The question to be asked here is whether replacing the subtree $T_t$ (originating from node $t$) with a terminal node $t$ significantly decreases the training error. For example, CART \cite{breiman1984classification} uses $\alpha$ to measure the strength of a split, which is defined as

\begin{equation}
\alpha = \frac{R(t)-R\left(T_t\right)}{\left|\tilde{T}_t\right|-1},
\label{eq:CARTalpha}
\end{equation}

where $R(T_t)$ is the training error of tree $T_t$ and $\left|\tilde{T}_t\right|$ is the number of the terminal nodes in the subtree $T_t$. From the formula, we know that it measures how each additional terminal node contributes to the reduction in overall training error. However, this measure focuses solely on the absolute amount of error reduction, rather than the error reduction relative to the sample size. Here is an example to illustrate the difference. Suppose we want to measure the strength of the split in Figure~\ref{Figure:stoppingNew}. Figure~\ref{Figure:stoppingNew}(a) shows the data pattern and the hypothetical first split in the decision tree, while Figure~\ref{Figure:stoppingNew}(b) illustrates the tree structure. There are 100 data points from each class. Suppose the decision rule at the root node is the plurality rule; it will predict that everyone belongs to the same class (either $A$ or $B$), resulting in 100 training errors. In the left node ($X_1 \leq 0$), there are 50 $A$ and 50 $B$ randomly distributed, and the training error in this node using the plurality rule is 50. In the right node ($X_1 > 0$), suppose we use an LDA model. In this case, classes A and B can be perfectly separated, resulting in no training error. Based on equation \eqref{eq:CARTalpha}, the strength of this split is $\alpha = \frac{100 - 50}{2 - 1} = 50$.\\

\begin{figure}[htbp]
    \centering
    \begin{subfigure}[b]{0.5\textwidth}
        \centering
        \includegraphics[width=\textwidth]{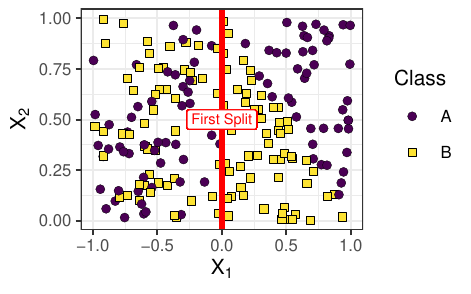}
        \caption{Scatter plot with 200 data points}
    \end{subfigure}%
    \hfill
    \begin{subfigure}[b]{0.5\textwidth}
        \centering
        \includegraphics[width=\textwidth]{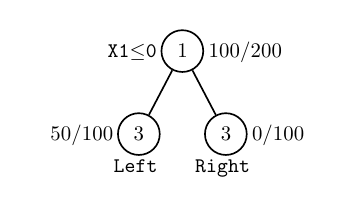}
        \caption{Decision tree corresponding to (a)}
    \end{subfigure}
        \begin{subfigure}[b]{0.5\textwidth}
        \centering
        \includegraphics[width=\textwidth]{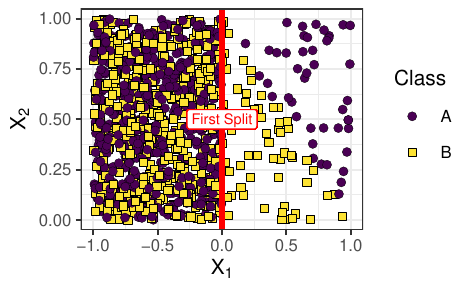}
        \caption{Scatter plot with 1200 data points}
    \end{subfigure}%
    \hfill
    \begin{subfigure}[b]{0.5\textwidth}
        \centering
        \includegraphics[width=\textwidth]{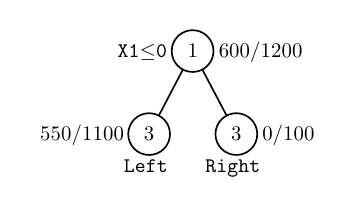}
        \caption{Decision tree corresponding to (c)}
    \end{subfigure}
	\caption{Simulated scenario where the split strength $\alpha$ from CART may be misleading (Section \ref{subsec:stoppingRule}). In the tree plots, the numbers $N_{\text{mis}}/N_{\text{total}}$ next to the nodes indicate that out of $N_{\text{total}}$ samples, $N_{\text{mis}}$ were misclassified (training errors)}
    \label{Figure:stoppingNew}
\end{figure}

Now, suppose we have a slightly different scenario. In addition to the 200 data points, we have 1000 more data points that satisfy $X_1 \leq 0$, and the results are summarized in Figure~\ref{Figure:stoppingNew}(c)(d). Here, the strength of the split is $\alpha = \frac{600 - 550}{2 - 1} = 50$, which is the same as in the scenario shown in Figure~\ref{Figure:stoppingNew}(a). However, these two splits have very different predictive power. The split in Figure~\ref{Figure:stoppingNew}(a) increases the training accuracy from $0.5$ to $150/200 = 0.75$. In contrast, the split in Figure~\ref{Figure:stoppingNew}(c), despite having the same $\alpha$ as in Figure~\ref{Figure:stoppingNew}(a), increases the training accuracy only from $0.5$ to $650/1200 = 0.54$. As you can imagine, if we keep adding points to the region $X_1 \leq 0$, the effectiveness of the split gradually decreases, but $\alpha$ remains unchanged, motivating us to find an alternative measure.\\

Suppose there are $N_{\text{total}}$ observations in the current node. Without splitting, the training error is $N_{\text{before}}$. After the split, the training error is $N_{\text{after}}$. Assuming the tree classifier has the same predictive power on all observations, the predictions can be viewed as binomial outcomes. Measures based on the binomial distribution have been proposed in previous literature. However, they have certain drawbacks. Let $p_1$ represent the prediction accuracy before the split and $p_2$ the accuracy after the split. Pessimistic Error Pruning \cite{quinlan1987simplifying} compares $p_2$ with the proportion of the plurality class instead of $p_1$. Consequently, it is overly optimistic, sensitive to randomness, and does not integrate well with non-trivial node models like LDA. C4.5 \cite{quinlan1993c4}, on the other hand, uses the confidence bound to form the stopping rule. It prunes the current split if the lower confidence bound of $p_1$ is larger than a weighted version of the lower confidence bound of $p_2$. This comparison of confidence bounds is difficult to interpret. More importantly, while generating the confidence bound requires a predefined type I error rate, the output is a binary decision rather than a continuous measure, leaving little room for further tuning.\\

Our proposed measure is based on a two-sample z-test. We test the hypothesis $ H_0: p_1 = p_2 $ against the alternative $H_1: p_1 < p_2$. Our test statistic is:

\begin{equation}
z = \frac{N_{\text{before}} - N_{\text{after}}}{\sqrt{\frac{1}{N_{\text{total}}}(N_{\text{before}}(N_{\text{total}} - N_{\text{before}}) + N_{\text{after}}(N_{\text{total}} - N_{\text{after}}))}} \sim N(0,1).
\end{equation}

Then the $p$-value from this one-sided z-test serves as a good measure of the split strength. For the split in Figure~\ref{Figure:stoppingNew}(a), the test statistic is $5.35$ and the corresponding $p$-value equals $4.5\times10^{-8}$. For the split in Figure~\ref{Figure:stoppingNew}(c), the test statistic is $2.04$ and the corresponding $p$-value equals $0.02$. This result shows that our proposed measure can better reflect the training accuracy in this scenario.\\

Based on this proposed measure, we implement both pre-stopping and post-pruning. For the pre-stopping rule, we stop splitting when the $p$-value falls below a predefined threshold (set to $0.01$ by default). For post-pruning, we adopt Cost-Complexity Pruning from CART. If the training accuracy remains the same after one split, we would expect a $p$-value of $0.5$. Thus, we set the threshold at $0.6$ to allow the current split to introduce slightly more errors while leaving room for potentially effective splits afterward. After the tree grows to its full depth, we use cross-validation to prune the tree.

\subsection{Missing Values}
\label{subsec:NA}

Although the decision tree itself has several ways to bypass the missing value problem, imputation is still necessary when building LDA models. Depending on the tree size, more than 1000 LDA models may need to be fitted. Therefore, it is crucial to find a method that is both effective and efficient. Here, we use a simple imputation method that imputes the sample median for numerical variables, adds missing value indicators, and imputes categorical variables with a new level. The added missing flags are essential when the missingness is informative. When the missing pattern is random, these missing flags will not cause issues and will be deleted during the forward selection step.\\

Another factor to consider is when to perform the imputation. We can either impute once at the root node (root-node imputation) or impute at every single node (node-wise imputation). Since we impute categorical variables with a new level, the imputation location does not matter for them. The only difference arises for numerical variables, and we present the following results.

\begin{lemma}[]
\label{lem:numColSpace1}
For a numerical predictor $\mathbf{X}$ containing missing values, we impute with a constant $C$ and add the missing value indicator $X^{-} = I(X = \text{NA})$. Then, the column spaces of $\{\mathbf{X}, \mathbf{X^{-}}\}$ does not depend on the choice of $C$.
\end{lemma}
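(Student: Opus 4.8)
The plan is to show that the two column spaces $\mathrm{col}\{\mathbf{X}_{C}, \mathbf{X}^{-}\}$ and $\mathrm{col}\{\mathbf{X}_{C'}, \mathbf{X}^{-}\}$ coincide for any two constants $C, C'$, where $\mathbf{X}_{C}$ denotes the vector obtained by imputing the missing entries of $\mathbf{X}$ with $C$. The key observation is that $\mathbf{X}_{C}$ and $\mathbf{X}_{C'}$ differ only in the coordinates flagged by $\mathbf{X}^{-}$, and on exactly those coordinates the difference is the constant $C - C'$. In other words, $\mathbf{X}_{C} - \mathbf{X}_{C'} = (C - C')\,\mathbf{X}^{-}$ as vectors in $\mathbb{R}^{n}$.

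From this identity the proof is essentially immediate. First I would write $\mathbf{X}_{C} = \mathbf{X}_{C'} + (C - C')\mathbf{X}^{-}$, which exhibits $\mathbf{X}_{C}$ as a linear combination of $\mathbf{X}_{C'}$ and $\mathbf{X}^{-}$, hence $\mathbf{X}_{C} \in \mathrm{col}\{\mathbf{X}_{C'}, \mathbf{X}^{-}\}$; since $\mathbf{X}^{-}$ trivially lies in that span, we get $\mathrm{col}\{\mathbf{X}_{C}, \mathbf{X}^{-}\} \subseteq \mathrm{col}\{\mathbf{X}_{C'}, \mathbf{X}^{-}\}$. By symmetry (swapping the roles of $C$ and $C'$) the reverse inclusion holds, so the two spans are equal. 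Taking $C'$ to be any fixed reference value (say $0$) then shows the column space of $\{\mathbf{X}, \mathbf{X}^{-}\}$ after imputation is independent of $C$.

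There is really no main obstacle here; the statement is a one-line linear-algebra fact once the difference identity is noticed. The only point that warrants a sentence of care is the degenerate case where $\mathbf{X}$ has \emph{no} missing values: then $\mathbf{X}^{-} = \mathbf{0}$ and $\mathbf{X}_{C} = \mathbf{X}_{C'} = \mathbf{X}$ regardless of $C$, so the claim holds trivially. One might also remark on why this matters for the application: since root-node and node-wise imputation with medians produce (generally different) constants within any given node, the lemma guarantees that the LDA fit in that node — which depends on $\mathbf{X}$ only through the column space of the design matrix — is unaffected by the choice, so the two imputation strategies are equivalent for numerical variables up to the span, and only the categorical-with-new-level handling (already argued to be location-independent) and the missing-indicator columns remain.
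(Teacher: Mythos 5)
Your proposal is correct and uses essentially the same argument as the paper: both hinge on the fact that the two imputed vectors differ by $(C-C')\mathbf{X}^{-}$, so each generator lies in the other span, and the equality follows by symmetry. Your packaging via the single identity $\mathbf{X}_{C}=\mathbf{X}_{C'}+(C-C')\mathbf{X}^{-}$ is a slightly cleaner statement of the coefficient substitution the paper carries out entry by entry, but it is not a different proof.
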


\begin{proof}
We want to show that the column spaces of $\{\mathbf{X}_{c_1}, \mathbf{X^{-}}_{c_1}\}$ and $\{\mathbf{X}_{c_2}, \mathbf{X^{-}}_{c_2}\}$ are the same for any $c_1, c_2$.

Any element $\mathbf{E}$ in the column space of $\{\mathbf{X}_{c_1}, \mathbf{X^{-}}_{c_1}\}$ can be expressed as $\mathbf{E} = a\cdot\mathbf{X}_{c_1} + b\cdot\mathbf{X^{-}}_{c_1}$. For the $i$-th entry that is not missing in $\mathbf{X}$, $E_i = a \cdot X_i + b \cdot 0$. For the $j$-th entry that is missing in $\mathbf{X}$, $E_j = a \cdot c_1 + b \cdot 1 = a \cdot c_2 + (b + a \cdot c_1 - a \cdot c_2) \cdot 1$. Thus, $\mathbf{E}$ can be rewritten as $\mathbf{E} = a \cdot \mathbf{X}_{c_1} + b \cdot \mathbf{X^{-}}_{c_1} = a^* \cdot \mathbf{X}_{c_2} + b^* \cdot \mathbf{X^{-}}_{c_2}$, where $a^* = a$ and $b^* = b + a \cdot c_1 - a \cdot c_2$. Therefore, $\mathbf{E}$ belongs to the column space of $\{\mathbf{X}_{c_2}, \mathbf{X^{-}}_{c_2}\}$, showing that $\text{span}(\mathbf{X}_{c_1}, \mathbf{X^{-}}_{c_1}) \subseteq \text{span}(\mathbf{X}_{c_2}, \mathbf{X^{-}}_{c_2})$. Similarly, we have $\text{span}(\mathbf{X}_{c_2}, \mathbf{X^{-}}_{c_2}) \subseteq \text{span}(\mathbf{X}_{c_1}, \mathbf{X^{-}}_{c_1})$. Eventually, this shows that $\text{span}(\mathbf{X}_{c_1}, \mathbf{X^{-}}_{c_1}) = \text{span}(\mathbf{X}_{c_2}, \mathbf{X^{-}}_{c_2})$.
\end{proof}

\begin{lemma}[]
\label{lem:numColSpace2}
For $K$ numerical predictors $\mathbf{X}_1, \mathbf{X}_2, \cdots, \mathbf{X}_K$ containing missing values, the combined column spaces of $\{\mathbf{X}_1, \mathbf{X}^{-}_1, \mathbf{X}_2, \mathbf{X}^{-}_2, \cdots, \mathbf{X}_K, \mathbf{X}^{-}_K\}$ does not depend on the choice of $C_1,C_2,\cdots,C_K$.
\end{lemma}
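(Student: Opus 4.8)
The plan is to deduce Lemma~\ref{lem:numColSpace2} from Lemma~\ref{lem:numColSpace1} using the elementary fact that the span of a list of vectors equals the sum of the spans of its sublists. First I would record the identity
\[
\text{span}(\mathbf{X}_1, \mathbf{X}^{-}_1, \ldots, \mathbf{X}_K, \mathbf{X}^{-}_K) = \sum_{k=1}^{K} \text{span}(\mathbf{X}_k, \mathbf{X}^{-}_k),
\]
where the right-hand side denotes the (not necessarily direct) sum of subspaces. This identity holds for \emph{every} choice of the imputation constants $C_1, \ldots, C_K$, since it is a purely formal statement about linear spans and uses nothing about the structure of the columns.

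Next I would observe that the $k$-th summand $\text{span}(\mathbf{X}_k, \mathbf{X}^{-}_k)$ is built from the $k$-th predictor and its missing-value indicator only, so it depends on $C_k$ alone among all the constants $C_1, \ldots, C_K$; and by Lemma~\ref{lem:numColSpace1} it does not even depend on $C_k$. Hence each summand on the right-hand side is a fixed subspace, so their sum is a fixed subspace, which is exactly the claim.

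If a more hands-on argument is preferred, I would instead change the constants one at a time. Given a source tuple $(C_1, \ldots, C_K)$ and a target tuple $(C_1', \ldots, C_K')$, I would pass through the intermediate tuples obtained by switching $C_1, \ldots, C_K$ to their primed values successively; at each step only one column $\mathbf{X}_k$ (and, trivially, $\mathbf{X}^{-}_k$, which is unchanged) is altered while all other columns are held fixed, so containment of column spaces in both directions is inherited from Lemma~\ref{lem:numColSpace1} applied inside the longer list. Composing the $K$ resulting equalities gives the result.

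I do not expect a genuine obstacle. The only points that need to be stated explicitly are the ``span of a union equals sum of spans'' identity and the remark that the $k$-th block never interacts with $C_j$ for $j \neq k$, which together allow Lemma~\ref{lem:numColSpace1} to be applied blockwise.
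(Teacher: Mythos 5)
Your proposal is correct and matches the paper's approach: the paper simply states that the result is a direct extension of Lemma~\ref{lem:numColSpace1}, and your blockwise decomposition (span of the combined list equals the sum of the per-predictor spans, each of which depends only on its own constant and is invariant by Lemma~\ref{lem:numColSpace1}) is exactly the argument that remark leaves implicit.
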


\begin{proof}
This result is a simple extension of Lemma \ref{lem:numColSpace1}.
\end{proof}

\begin{lemma}[]
\label{lem:ULDA}
The decision rules from the ULDA model do not depend on the choice of $C_1,C_2,\cdots,C_K$.
\end{lemma}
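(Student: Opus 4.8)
The plan is to show that the ULDA decision rule depends on the design matrix only through the column space of the predictors, and then invoke Lemma~\ref{lem:numColSpace2} to conclude invariance under the choice of imputation constants. The key observation is that ULDA, as defined in \eqref{eq:ULDA}, is invariant under any invertible linear transformation of the feature space: if $\mathbf{X} \mapsto \mathbf{X}\mathbf{A}$ for an invertible $\mathbf{A}$, then $\mathbf{S}_T \mapsto \mathbf{A}^T\mathbf{S}_T\mathbf{A}$ and $\mathbf{S}_B \mapsto \mathbf{A}^T\mathbf{S}_B\mathbf{A}$, and one checks that $\mathbf{W} \mapsto \mathbf{A}^{-1}\mathbf{W}$ maps the optimizer to the optimizer (the constraint $\mathbf{W}^T\mathbf{S}_T\mathbf{W}=I$ and the trace objective are both preserved). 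Hence the linear discriminant scores $\mathbf{X}\mathbf{W}$ — and therefore the induced classifier via distances to class centroids in score space with (equal or estimated) priors — are unchanged. This is essentially the classical affine-invariance of LDA, here stated for the ULDA variant; I would cite \cite{ye2004feature} for the equivalence to classical LDA when $\mathbf{S}_T$ is nonsingular and otherwise argue directly from \eqref{eq:ULDA}.

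First I would set up notation: let $\mathcal{M} = \{\mathbf{X}_1, \mathbf{X}^{-}_1, \dots, \mathbf{X}_K, \mathbf{X}^{-}_K\}$ be the augmented design matrix under imputation constants $C_1,\dots,C_K$, and $\mathcal{M}'$ the analogous matrix under a different choice $C_1',\dots,C_K'$. By Lemma~\ref{lem:numColSpace2}, $\operatorname{span}(\mathcal{M}) = \operatorname{span}(\mathcal{M}')$. Second, I would note that whenever two matrices have the same column space, each is obtained from the other by right-multiplication by an invertible matrix after discarding redundant (linearly dependent) columns — and since ULDA already handles rank deficiency via the Moore–Penrose inverse, restricting to a basis of the common column space gives the same scatter matrices up to the congruence $\mathbf{A}^T(\cdot)\mathbf{A}$ above. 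Third, apply the affine-invariance step to conclude that the discriminant scores, the projected class means, and hence the predicted-class assignment (and posterior probabilities) coincide for $\mathcal{M}$ and $\mathcal{M}'$.

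The main obstacle I anticipate is handling the rank-deficient case cleanly: $\mathcal{M}$ and $\mathcal{M}'$ need not be related by an invertible transformation as full matrices, only after passing to a basis of the shared column space, and one must verify that ULDA's output genuinely factors through that column space rather than through the particular spanning set. Concretely, I would argue that $\mathbf{S}_T$ and $\mathbf{S}_B$ for $\mathcal{M}$ equal $\mathbf{P}^T$ times the corresponding scatter matrices of any fixed basis $\mathbf{B}$ of $\operatorname{span}(\mathcal{M})$ times $\mathbf{P}$, where $\mathcal{M} = \mathbf{B}\mathbf{P}$; then show the ULDA objective and its maximizer-induced classifier are unchanged when $(\mathbf{S}_T,\mathbf{S}_B) \mapsto (\mathbf{P}^T\mathbf{S}_T\mathbf{P}, \mathbf{P}^T\mathbf{S}_B\mathbf{P})$ even for non-square $\mathbf{P}$ of full row rank — this is where the Moore–Penrose inverse in \eqref{eq:ULDA} does the real work. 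Once that lemma-within-the-proof is in place, the rest is a short chain of equalities, and the remaining verification (that equal priors versus estimated priors, and the fallback to the plurality rule, are likewise unaffected) is immediate since those depend only on predicted-class counts, which are now known to be invariant.
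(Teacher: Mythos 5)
Your proposal is correct and follows essentially the same route as the paper's own proof: Lemma~\ref{lem:numColSpace2} gives equality of column spaces, the ULDA criterion \eqref{eq:ULDA} is invariant under right-multiplication of the design matrix by an invertible $\mathbf{R}$ (with optimizers corresponding via $\mathbf{W} \mapsto \mathbf{R}\mathbf{W}$), and hence the discriminant scores and the induced classifier coincide. The two places where you are more careful than the paper are both worth keeping: your worry about the rank-deficient case is legitimate in principle but can be short-circuited here, since the two imputed augmented matrices are related explicitly by $\mathbf{X}_{c_2} = \mathbf{X}_{c_1} + (c_2 - c_1)\mathbf{X}^{-}$ with $\mathbf{X}^{-}$ unchanged, i.e., by a unit-triangular and therefore invertible $\mathbf{R}$ regardless of any rank deficiency; and your insistence on concluding invariance of the scores, centroids, and posteriors (rather than merely of the optimal trace value, which is all the paper's chain of inequalities literally establishes) is exactly the step needed to justify the phrase ``their decision rules are the same.''
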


\begin{proof}
Suppose we have two design matrices $\mathbf{X}_1$ and $\mathbf{X}_2$ where the only difference is their choices of $C_i$. Based on Lemma \ref{lem:numColSpace2} we know that they share the same column space, which means $\mathbf{X}_2 = \mathbf{X}_1 \mathbf{R}$ where $\mathbf{R}$ is full rank square matrix. Suppose the transformation matrices we find for $\mathbf{X}_1$ and $\mathbf{X}_2$ based on equation \eqref{eq:ULDA} are $\mathbf{W}_1$ and $\mathbf{W}_2$, respectively. Define

\begin{equation}
	F_{\mathbf{X}}(\mathbf{W})=\operatorname{trace}\left(\left(\mathbf{W}^T \mathbf{S}_T \mathbf{W}\right)^{+}\left(\mathbf{W}^T \mathbf{S}_B \mathbf{W}\right)\right).
\end{equation}

Then we have

\begin{equation}
	F_{\mathbf{X}_2}(\mathbf{W}_2) = F_{\mathbf{X}_1}(\mathbf{R}\mathbf{W}_2) \le F_{\mathbf{X}_1}(\mathbf{W}_1)
\end{equation}

By similar argument, we have $F_{\mathbf{X}_1}(\mathbf{W}_1) \le F_{\mathbf{X}_2}(\mathbf{W}_2)$. Therefore, $F_{\mathbf{X}_1}(\mathbf{W}_1) = F_{\mathbf{X}_2}(\mathbf{W}_2)$, and their decision rules are the same. Here, the same decision rule means the same discriminant power (trace), but the transformation matrix can differ by a full-rank rotation matrix.
\end{proof}

\begin{theorem}
For the LDATree model, the node-wise imputation and the root-node imputation will lead to the same decision rule.
\end{theorem}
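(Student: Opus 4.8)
The plan is to argue that the two imputation strategies produce LDA models with identical decision rules at every node, by induction on the depth of the tree. The key observation is that, conditional on an observation reaching a given node, the set of observations in that node is the same under both imputation schemes \emph{provided the splits leading to it are the same}; so it suffices to show that at any fixed node, given the same set of observations routed there, the ULDA decision rule is insensitive to whether the numerical columns were imputed at the root or locally. This is exactly where Lemma~\ref{lem:ULDA} does the heavy lifting, but one must be careful: that lemma compares two design matrices on the \emph{same} rows that differ only in the imputation constants $C_i$, whereas root-node versus node-wise imputation also differ in \emph{which} constant is used (the global median versus the node-local median). The point is that both are still just ``some constant,'' so Lemma~\ref{lem:ULDA} applies with $C_i^{\text{root}}$ being the global median and $C_i^{\text{node}}$ being the local median.

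First I would set up the induction. At the root node the two schemes are identical by definition (both impute the global median), so the fitted ULDA model, hence the split into $J'$ child nodes, is literally the same; in particular each observation is routed to the same child under both schemes. For the inductive step, suppose that at some node $t$ the set of observations present is the same under both schemes (this is the induction hypothesis, vacuously true at the root). The numerical design matrix restricted to these rows is, under root-node imputation, the global-median-imputed matrix; under node-wise imputation, it is re-imputed with the node-local medians. By Lemma~\ref{lem:numColSpace2} these two matrices have the same column space, and by Lemma~\ref{lem:ULDA} the ULDA model fitted on them yields the same decision rule, meaning the same partition of the node's observations into predicted-class groups and the same posterior-probability-based assignment of any hidden-class cases. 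Hence the child nodes receive the same observations under both schemes, closing the induction.

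A few subsidiary points need to be checked to make the argument airtight. One must confirm that the auxiliary machinery in the framework — the equal-prior switch triggered by the Gini index of predicted classes, the fallback to the plurality rule when LDA does not improve training accuracy, and the stopping/pruning decisions based on the $z$-test $p$-value — all depend on the data only through the ULDA decision rule and the resulting misclassification counts, not through the particular transformation matrix $\mathbf{W}$ (which Lemma~\ref{lem:ULDA} only pins down up to a full-rank rotation). Since the Gini index, the training accuracy, $N_{\text{before}}$, $N_{\text{after}}$, and $N_{\text{total}}$ are all functions of predicted versus true class labels, and the predicted labels are determined by the decision rule, these quantities coincide across the two schemes; therefore the tree topology (which splits are kept, where pruning occurs) is identical, and so is the final node model in each terminal node. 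I would also note that the cross-validation pruning step is deterministic given a fold assignment, so it too transfers, though strictly this is outside the ``same decision rule'' claim and could be mentioned in a remark.

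The main obstacle I anticipate is not any single hard computation but rather making the induction hypothesis precise enough to carry through cleanly: one must track that ``same observations at node $t$'' plus ``same decision rule at node $t$'' together imply ``same observations at each child of $t$,'' and this requires knowing that routing is a pure function of the ULDA decision rule applied to each observation's (imputed) feature vector. A subtlety here is that an individual test-or-train observation's imputed feature vector differs between the two schemes (different constants), yet Lemma~\ref{lem:ULDA}'s guarantee is about the fitted rule on the training rows, so I would spell out that the decision rule, expressed in the original (pre-rotation) coordinates via $\mathbf{X} = \mathbf{X}_1 \mathbf{R}$, assigns the same predicted class to corresponding rows because the discriminant scores transform compatibly — i.e., the induced partition of $\mathbb{R}^{p}$ (with the missing-indicator coordinates) into predicted-class regions is the same set-theoretic object even though its description in terms of $\mathbf{W}$ changes. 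Once that is stated carefully, the rest is bookkeeping.
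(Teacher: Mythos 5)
Your proposal is correct and follows essentially the same route as the paper's own proof, which likewise reduces the theorem to showing (via Lemma~\ref{lem:ULDA}) that at any single node the two imputation constants yield the same ULDA model, and then notes that identical models imply identical splits and child-node structure so "the rest will follow." Your write-up is simply a more careful, fully spelled-out version of that induction, and the subsidiary checks you flag (equal-prior switch, plurality fallback, stopping rule, and the coordinate-change subtlety in routing) are exactly the details the paper leaves implicit.
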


\begin{proof}
The difference between these two imputation methods is their choice of $C_i$. For numerical variables, the root-node imputation always imputes with the median from the root node, while the node-wise imputation imputes with the median from the current node, leading to a different set of $C_i$.\\

In the LDATree model, we use ULDA for both the node model and split. As long as the current ULDA models are the same for both imputation methods, they will share the same split and child node structure, and the rest will follow. Therefore, we only need to prove that in one particular node, these two imputations yield the same ULDA model while having different choices of $C_i$, which is proved in Lemma \ref{lem:ULDA}.
\end{proof}

Generally speaking, node-wise imputation should yield a different solution compared to root-node imputation. However, we do two things differently: we add the missing value indicators and use the LDA split. Together, these make the two methods equivalent under the LDATree model. Although the two trained models are the same, the testing results could differ if both of the following occur:

\begin{enumerate}
	\item There exists perfect linear dependency in the design matrix.
	\item The new testing data does not comply with the same linear dependency.
\end{enumerate}

During our simulation, this seldom happens. When it does, the difference is negligible. When fitting a FoLDTree model, the difference between the two imputation methods becomes much harder to investigate, as taking a subset of variables changes the column space of the design matrix. The only scenario in which these two imputation methods yield different results is when all of the following occur:
\begin{enumerate}
\item The two imputation methods should impute different values. This means the node-wise median should differ from the overall median.
\item The forward ULDA should select different sets of variables. This means that for at least one variable, one imputation method will treat it as significant while the other will not.
\end{enumerate}

The trickiest part is the second point above. The significance of the variable after imputation depends on the class structure, so neither of these two methods is theoretically admissible. Based on our simulations, the difference between the two methods when fitting FoLDTree is negligible. In our implementation, we set the node-wise imputation as our default missing value solution.

\subsection{Algorithm Illustration}
\label{subsec:algorithmIllustration}

Here, we use the chessboard pattern to illustrate how our algorithm works. The results are summarized in Figure~\ref{Figure:algorithmIllustration}. The top row shows the original pattern, where no split has been applied, so the sample space remains intact. The pattern is intentionally rotated to challenge traditional axis-orthogonal decision trees and is also difficult for LDA due to its symmetry. To find the first split, we fit an LDA model and use it as the split. Since there are two classes, LDA finds a line that divides the space into two parts, as shown in Figure~\ref{Figure:algorithmIllustration}(c). We now have two subspaces, and we fit LDA models in each subspace, using them as node models. The prediction regions are shown in Figure~\ref{Figure:algorithmIllustration}(d). We then apply the two LDA models as the splitting rule, and the sample space is now divided into four parts, as shown in Figure~\ref{Figure:algorithmIllustration}(e). Since we use LDA for both the node model and splitting rule, it is unsurprising that the shapes in Figure~\ref{Figure:algorithmIllustration}(d) and Figure~\ref{Figure:algorithmIllustration}(e) are identical, differing only in color. Next, we fit four LDA models in the four subspaces, leading to the prediction regions shown in Figure~\ref{Figure:algorithmIllustration}(f). Interestingly, with this new split, the center part is correctly predicted, unlike in Figure~\ref{Figure:algorithmIllustration}(d). As the splitting continues, the sample space is divided into increasingly smaller pieces, and the prediction accuracy improves. Finally, with post-pruning, we obtain the results shown in the last row of Figure~\ref{Figure:algorithmIllustration}, consisting of 157 subspaces in total. We apply the model to a separate test set and achieve a testing accuracy of 0.98. This example demonstrates that integrating LDA into the decision tree framework enables the decision tree to handle more complex patterns effectively.

\begin{figure}[htbp]
    \centering
    \begin{subfigure}[b]{0.34\textwidth}
        \centering
        \includegraphics[width=\textwidth]{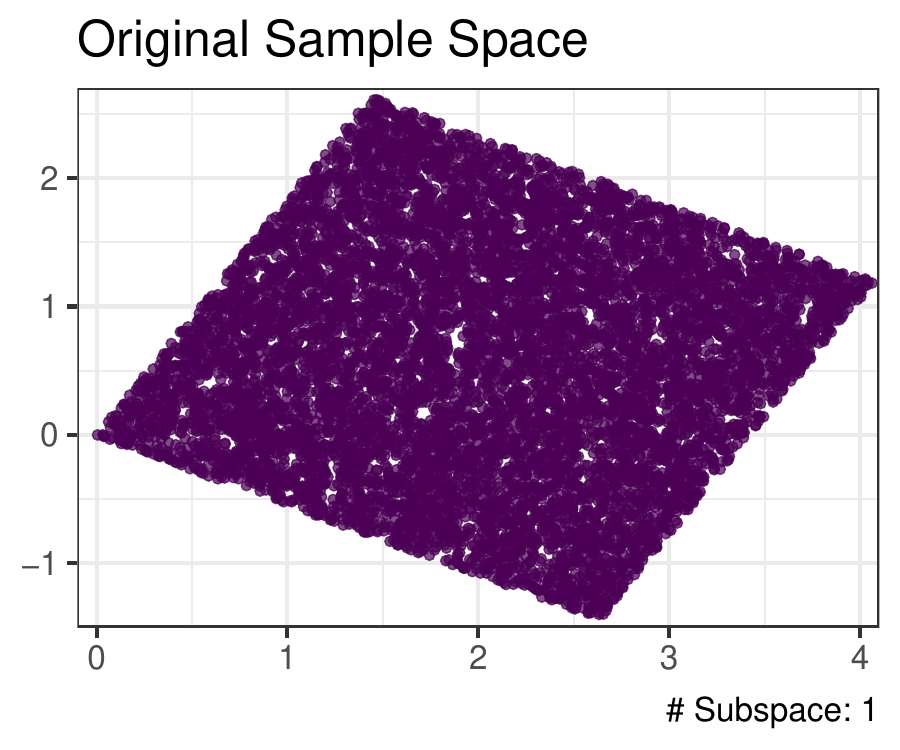}
        \caption{}
    \end{subfigure}%
    \hfill
    \begin{subfigure}[b]{0.34\textwidth}
        \centering
        \includegraphics[width=\textwidth]{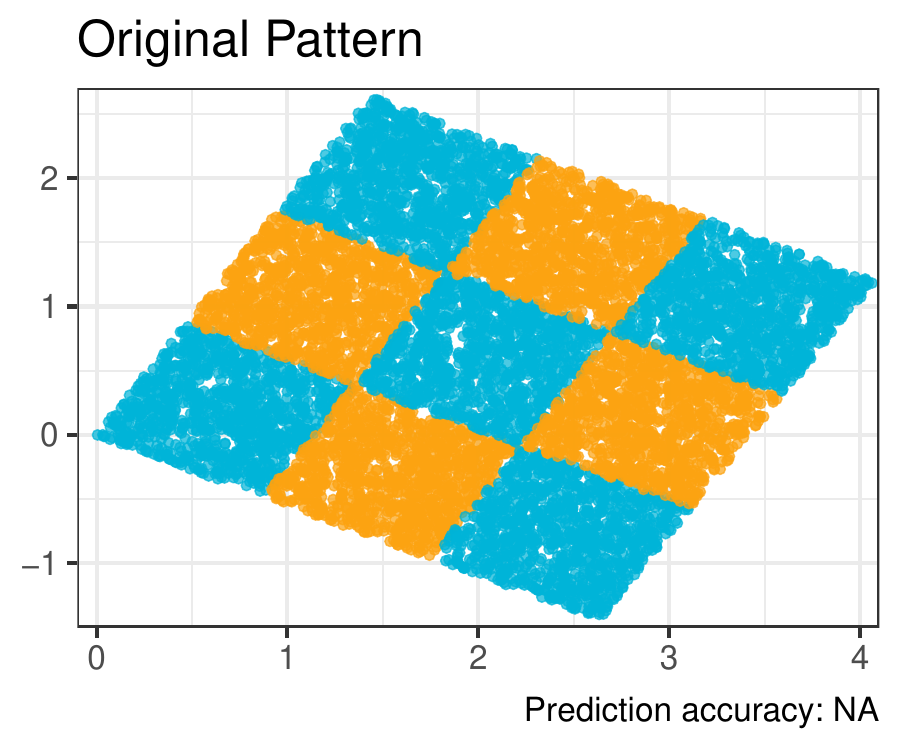}
        \caption{}
    \end{subfigure}
    \begin{subfigure}[b]{0.34\textwidth}
    \centering
    \includegraphics[width=\textwidth]{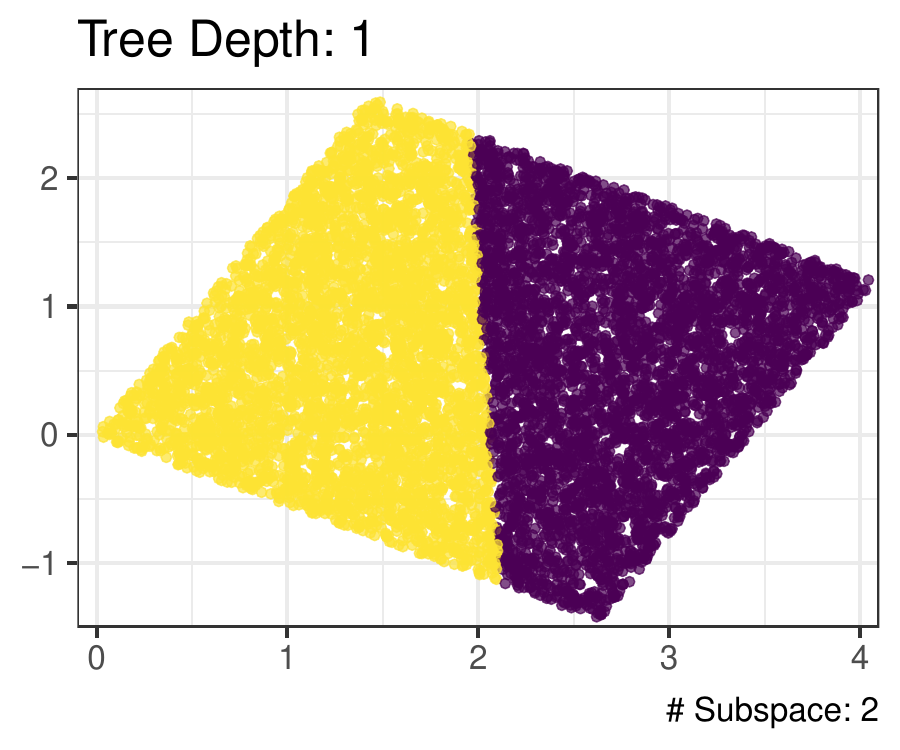}
    \caption{}
    \end{subfigure}%
    \hfill
    \begin{subfigure}[b]{0.34\textwidth}
        \centering
        \includegraphics[width=\textwidth]{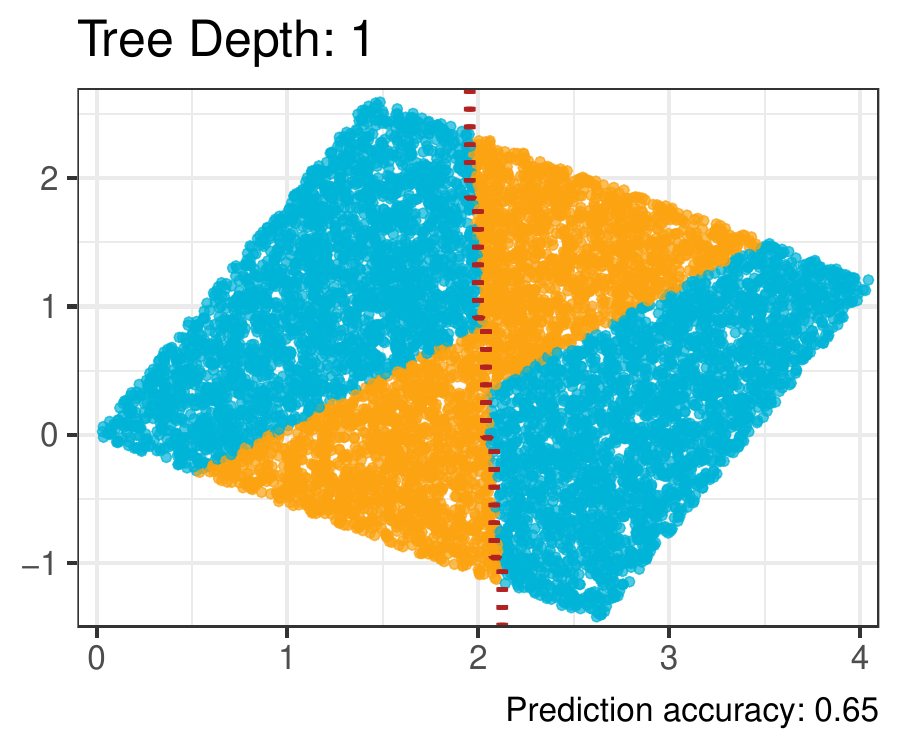}
    \caption{}
    \end{subfigure}
    \begin{subfigure}[b]{0.34\textwidth}
    \centering
    \includegraphics[width=\textwidth]{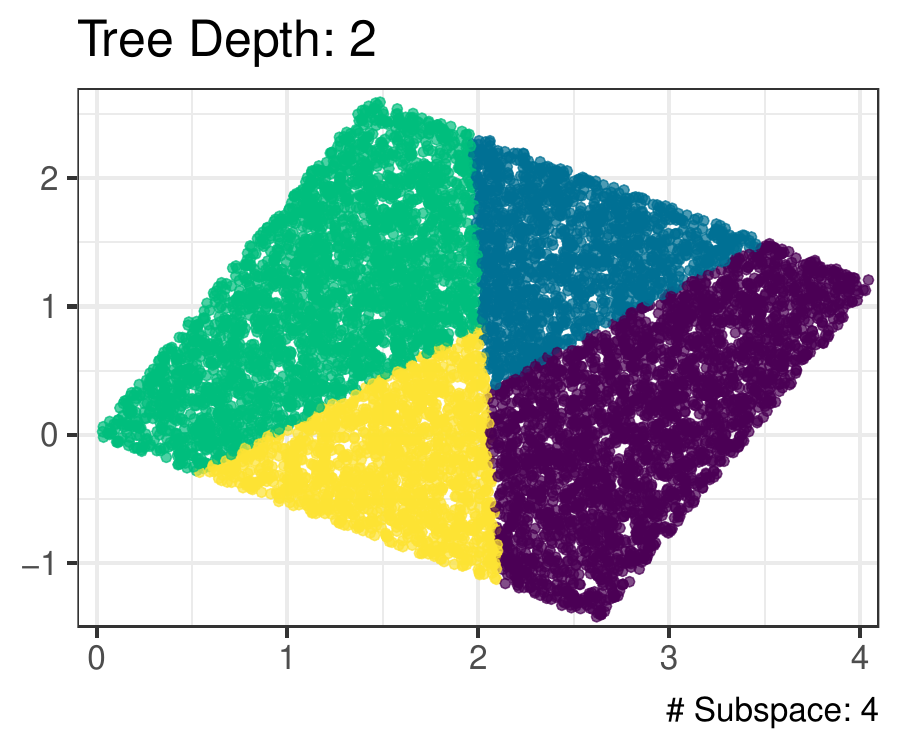}
    \caption{}
    \end{subfigure}%
    \hfill
    \begin{subfigure}[b]{0.34\textwidth}
        \centering
        \includegraphics[width=\textwidth]{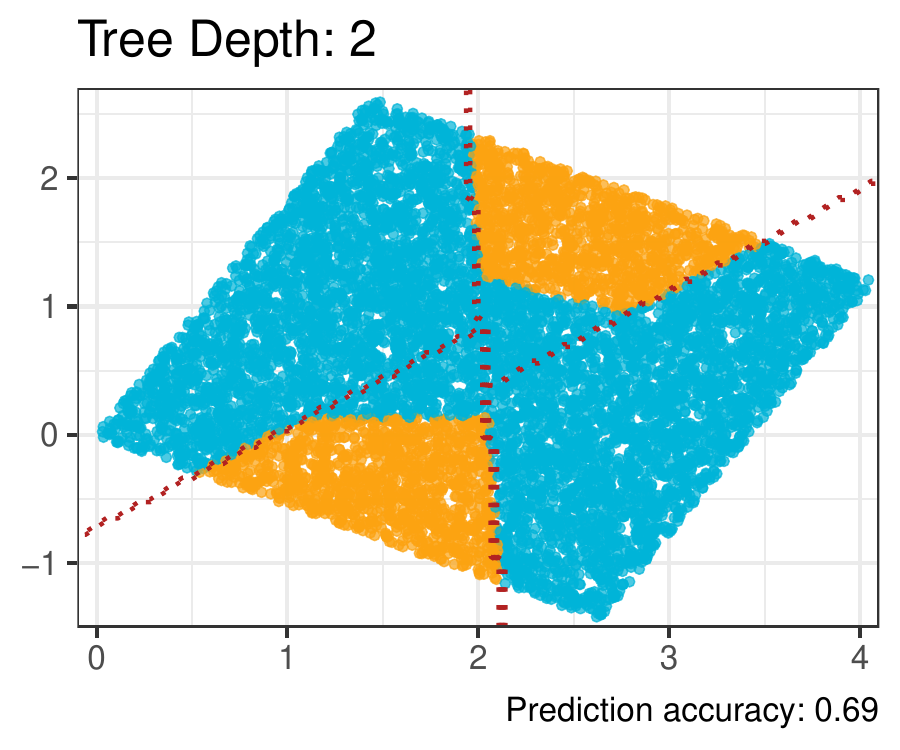}
    \caption{}
    \end{subfigure}
    \begin{subfigure}[b]{0.34\textwidth}
    \centering
    \includegraphics[width=\textwidth]{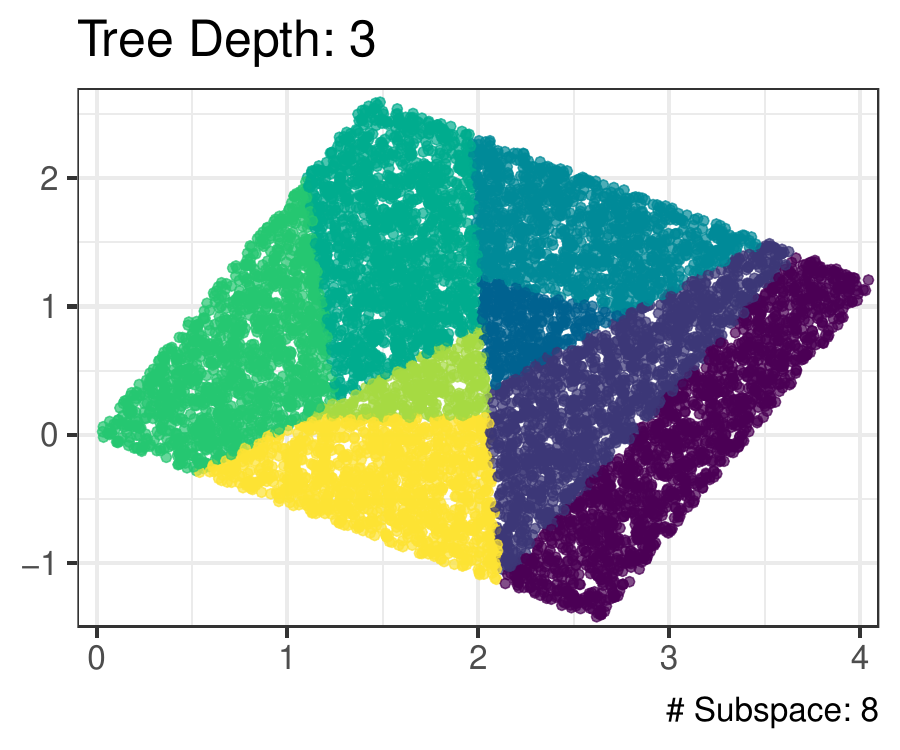}
    \caption{}
    \end{subfigure}%
    \hfill
    \begin{subfigure}[b]{0.34\textwidth}
        \centering
        \includegraphics[width=\textwidth]{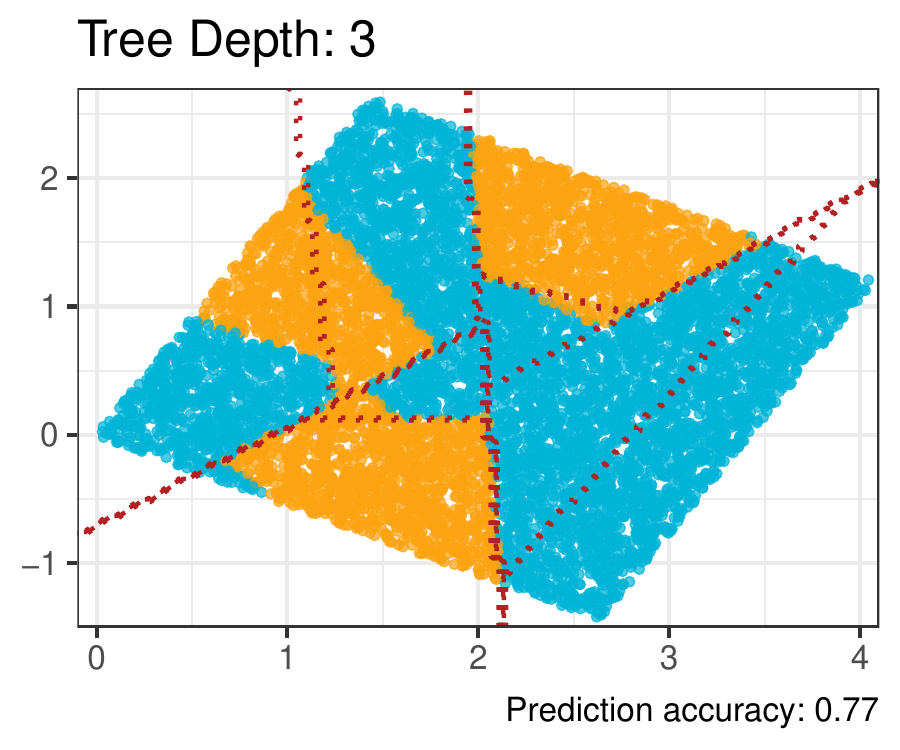}
    \caption{}
    \end{subfigure}
    \begin{subfigure}[b]{0.34\textwidth}
    \centering
    \includegraphics[width=\textwidth]{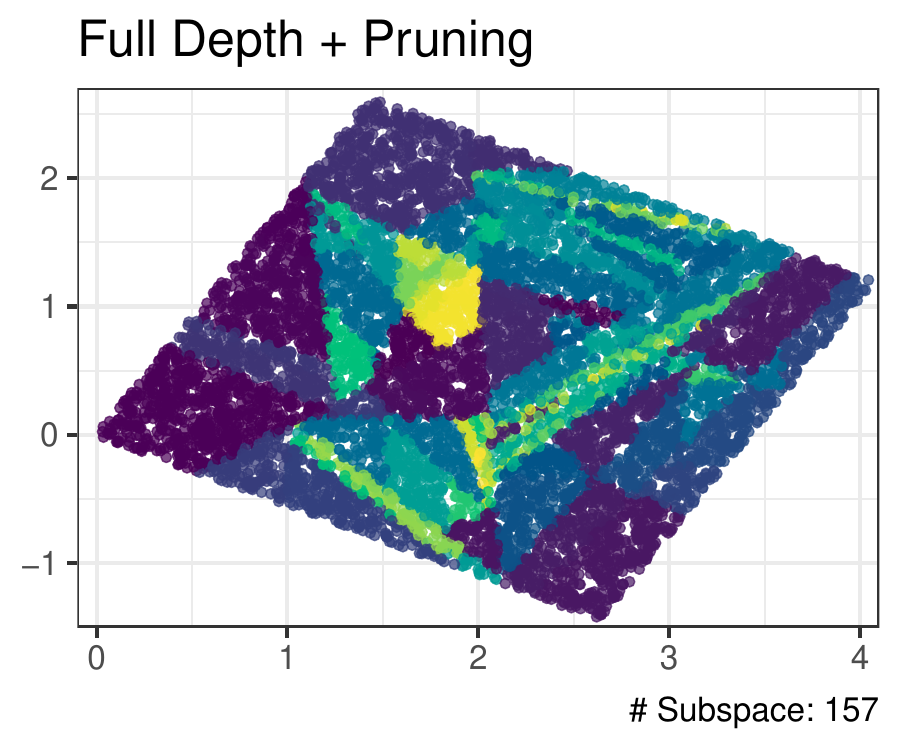}
    \caption{}
    \end{subfigure}%
    \hfill
    \begin{subfigure}[b]{0.34\textwidth}
        \centering
        \includegraphics[width=\textwidth]{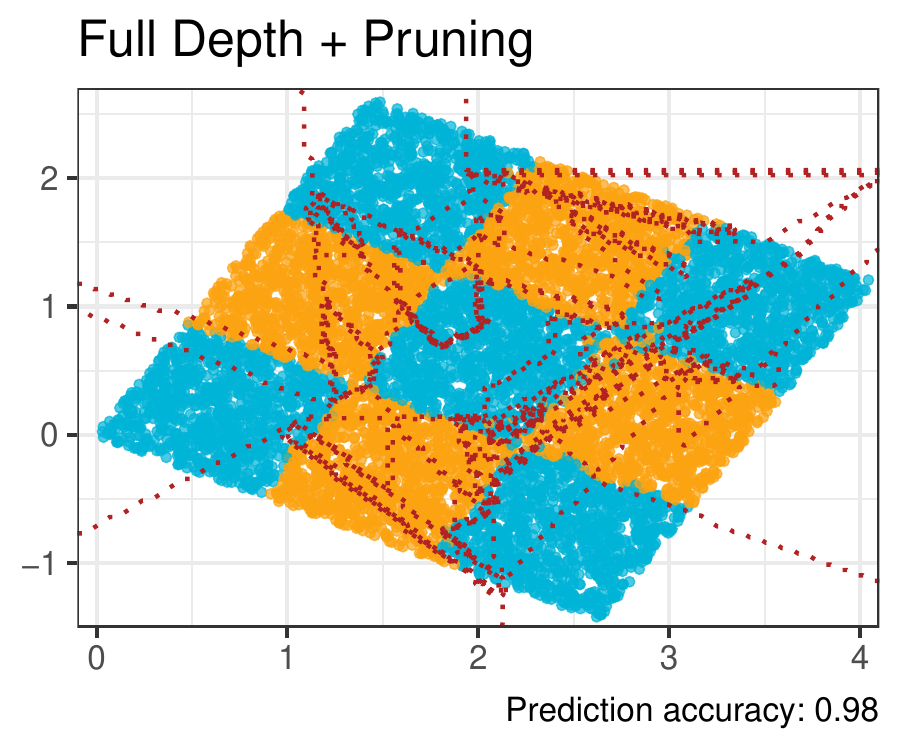}
    \caption{}
    \end{subfigure}
	\caption{Illustration of the LDATree algorithm (Section \ref{subsec:algorithmIllustration}). The right column shows the original and predicted patterns, while the left column shows how the decision tree divides the sample space. The tree is read from top to bottom, with additional splits introduced at each level. The final row presents the post-pruning results}
    \label{Figure:algorithmIllustration}
\end{figure}

\section{Empirical Analysis}
\label{sec:Empirical}

In this section, we use simulation and real data to demonstrate the performance of our proposed methods and other decision trees, particularly those with oblique splits. Details about the methods used for comparison are provided in Table~\ref{Table:simIntro}. Unless otherwise specified, we use the default parameters for all methods without additional tuning. For LDATree and FoLDTree, we apply the post-pruning stopping rule.

\begin{table}[]
\centering
\begin{tabular}{|l|p{12cm}|}
\hline
Method        & Description                                                                                                                                                                                     \\ \hline
LDATree & Our proposed method using ULDA as the splitting rule and node model. More details are provided in Section~\ref{sec:algorithm}.                        \\ \hline
FoLDTree & Our proposed method using forward ULDA as the splitting rule and node model. More details are provided in Section~\ref{sec:algorithm}.                  \\ \hline
CART & The CART method from the R package \texttt{rpart}. \\ \hline
GUIDE & The GUIDE classification tree \cite{loh2009improving}, with the default splitting option set to linear combinations within the program. \\ \hline
Stree \cite{montanana2021stree} & An oblique decision tree based on SVM. We use the Python package \texttt{STree} for this analysis. \\ \hline
PPtree \cite{lee2013pptree} & The projection pursuit classification tree. We use the R package \texttt{PPtreeViz} for this analysis. \\ \hline
RF & Random forest. We use the R package \texttt{ranger} for this analysis. \\ \hline
\end{tabular}
\caption{Descriptions of the methods tested in Section \ref{sec:Empirical}}
\label{Table:simIntro}
\end{table}

\subsection{Simulation 1: Robustness to Noise Variables}
\label{subsec:simCase1}

In this simulation, we aim to show that FoLDTree is robust to noise variables and can simultaneously be used as a variable selection tool. This is a useful property, especially if you have many variables.\\

First, we create a chessboard pattern, a 3x3 square with three classes. Each small square contains 2000 uniformly distributed points, resulting in a total of 18,000 points. We split the data 50:50 for training and testing, and the results are summarized in Figure~\ref{Fig:simCase0}. \texttt{RF}, \texttt{LDATree}, and \texttt{FoLDTree} show the highest testing accuracies, followed by \texttt{CART} and \texttt{GUIDE}. \texttt{PPtree} and \texttt{Stree} perform poorly in this scenario.\\

\begin{figure}[htbp]
  	\centering
	\includegraphics[width = 1\textwidth]{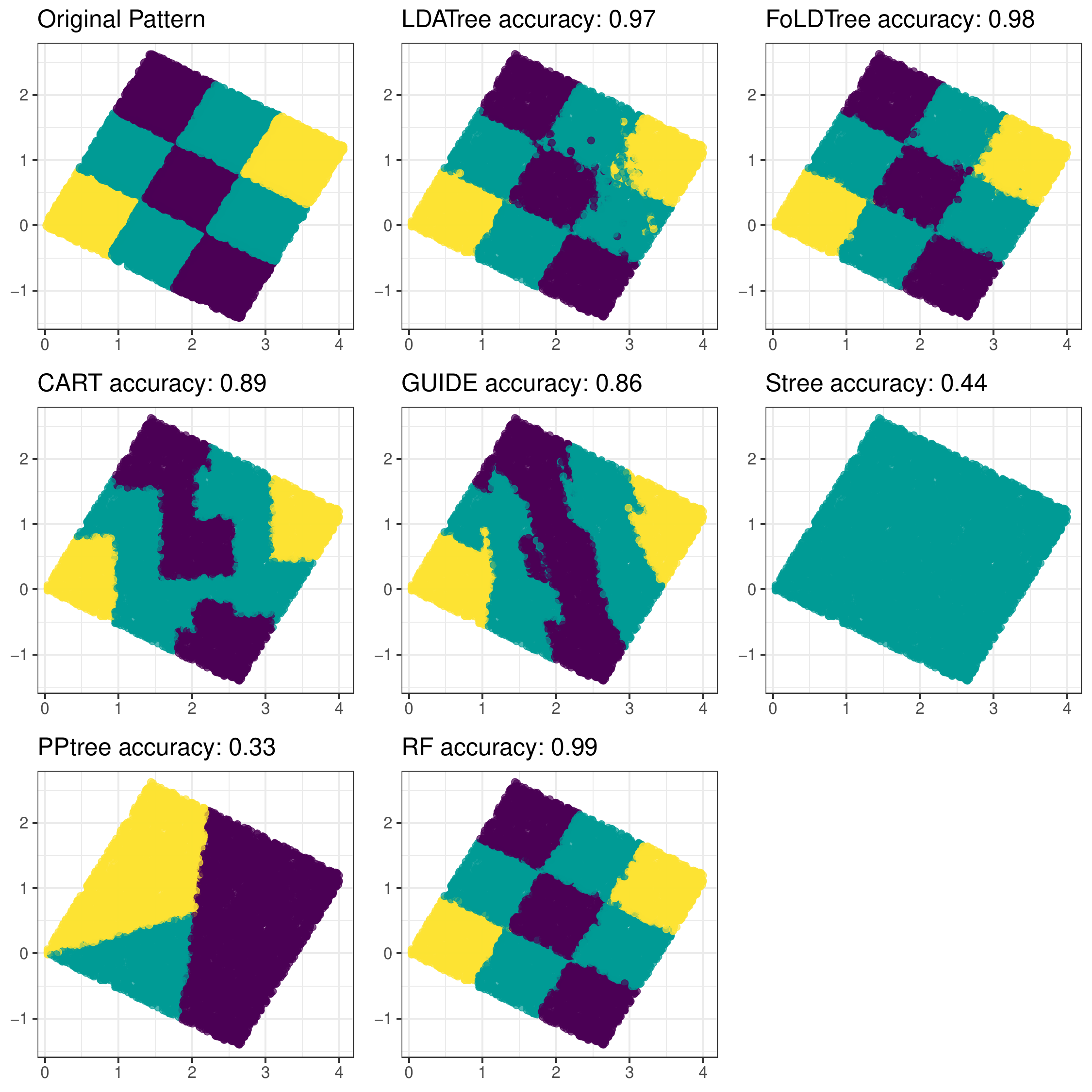}	
	\caption{The simulation results on a 3X3 chessboard pattern (Section \ref{subsec:simCase1}). Except for the original pattern, other plots show the prediction regions.}
	\label{Fig:simCase0}
\end{figure}

Next, we evaluate the change in performance when noise variables are added. In addition to the two informative variables, we add 100 pure noise variables from a standard normal distribution. The results are summarized in Figure~\ref{Fig:simCase1}. As expected, a performance drop occurs due to the noise variables. However, \texttt{FoLDTree}, \texttt{CART}, and \texttt{GUIDE} exhibit minimal performance drop due to their intrinsic variable selection processes. \texttt{LDATree}, which uses all variables to fit LDA, learns from noise variables and thus performs poorly. \texttt{RF} is also significantly affected, as shown by the incorrectly predicted inner purple region in the plot. This is because, during classification, random forest randomly selects $\sqrt{M}$ variables (where $M$ is the number of available variables) at each split, making the selection of informative variables less likely when most are noise. \texttt{Stree} shows an unexpected performance improvement, potentially due to premature stopping in the previous scenario when only two variables are available. These results suggest that \texttt{FoLDTree} is more likely to outperform random forest and other classifiers when numerous noise variables are present.

\begin{figure}[htbp]
  	\centering
	\includegraphics[width = 1\textwidth]{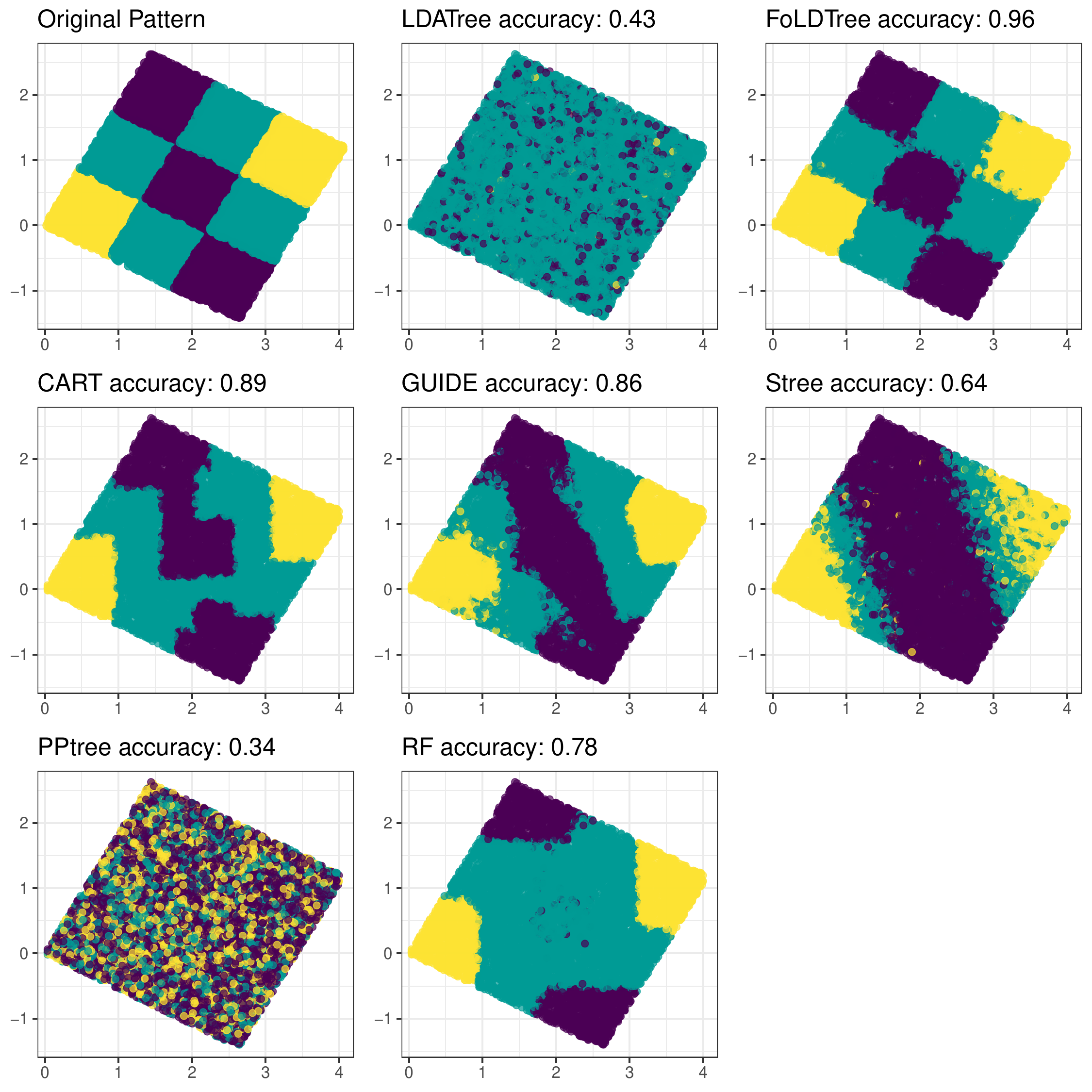}	
	\caption{The simulation results on a 3X3 chessboard pattern with noise variables (Section \ref{subsec:simCase1}). Except for the original pattern, other plots show the prediction regions}
	\label{Fig:simCase1}
\end{figure}

\subsection{Simulation 2: Look-Ahead Splitting}
\label{subsec:simCase2}

One advantage of the decision tree model is its continuous splitting and fitting process. It is acceptable for the decision tree to fail to find an optimal model at the current node. Recursive splitting shrinks the sample space for each node, often enabling the decision tree to identify better models in the newly generated nodes, as shown in Figure~\ref{Figure:LDAsymmetry}. To enhance performance, we sometimes continue splitting even if the current split temporarily decreases overall performance, with the expectation that subsequent splits will compensate for this drop. Therefore, each decision tree algorithm must determine the extent of this exploration: when a good split is absent, how many additional steps should it look ahead? In this section, we show that both LDATree and FoLDTree can look as many steps ahead as necessary, providing them an advantage over other decision tree methods that use more immediate, greedy strategies.\\

Our simulation is based on the XOR shape in six-dimensional space. The standard XOR problem is in two-dimensional space with two classes. $(0,0)$ and $(1,1)$ return $0$, while $(0,1)$ and $(1,0)$ return $1$. In a 2D plot, this appears as the chessboard pattern shown in Figure~\ref{Fig:simCase0}, but with a 2x2 shape instead of a 3x3. No single cut can significantly decrease the impurity. One step ahead is needed: first, a trivial cut must be made, followed by a more effective cut. For a 3D XOR, two steps ahead are required. As expected, the complexity of the problem increases with dimensionality.\\

To set up the simulation, let the six variables be denoted as $X_1, X_2, \dots, X_6$. Each variable can take a value of either $0$ or $1$, resulting in $2^6 = 64$ centers. We define the response variable $Y$ as $Y = (X_1 + X_2 + X_3 + X_4 + X_5 + X_6) \mod 2$. We sample 100 data points around each center. For a specific center $\mathbf{C} = (c_1, c_2, \dots, c_6)$, points are sampled from $\mathcal{N}(\mathbf{C}, 0.2 \times \mathbf{I}_6)$. This results in a total of $100 \times 64 = 6400$ points. We apply 10-fold cross-validation (CV) to evaluate model performance, with results summarized in Figure~\ref{Fig:simCase2}. Note that confidence intervals are calculated based on the 10 test accuracies from the 10-fold CV.\\

\begin{figure}[htbp]
  	\centering
	\includegraphics[width = 1\textwidth]{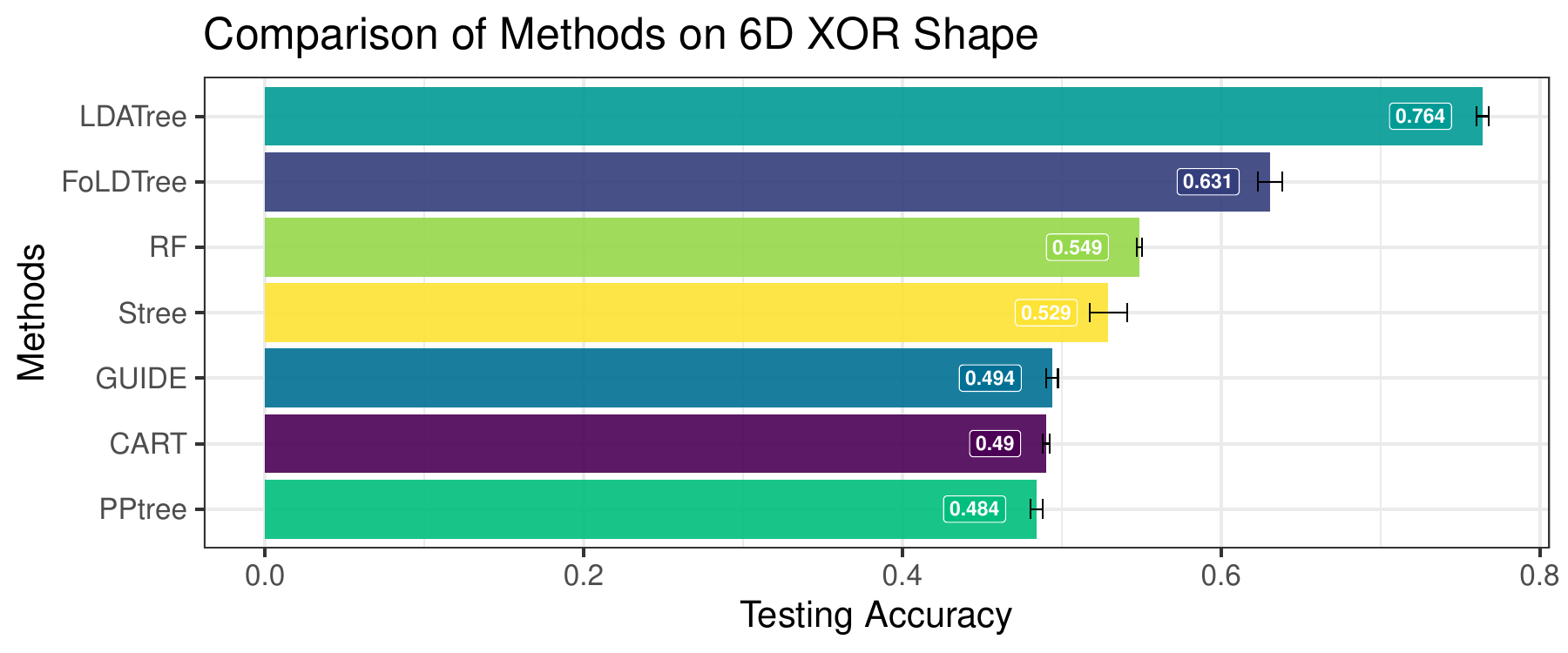}	
	\caption{The testing results (6D XOR) in Section \ref{subsec:simCase2}. Methods are ordered by their accuracies, with confidence intervals for accuracies shown in 2SD error bars.}
	\label{Fig:simCase2}
\end{figure}

\texttt{LDATree} has the highest testing accuracy, followed by \texttt{FoLDTree}. This difference arises because the 6D XOR pattern involves a significant 6th-order interaction but non-significant main effects and 2nd-order to 5th-order interactions. Therefore, the forward selection in \texttt{FoLDTree} may fail to select effective variables in certain nodes, while \texttt{LDATree} uses all the variables in all nodes. \texttt{GUIDE}, \texttt{CART}, and \texttt{PPtree} have the lowest testing accuracies, performing similarly to the plurality rule ($0.5$). \texttt{RF} and \texttt{Stree} are in the middle. Thus, we conclude that for problems involving high-order interactions and requiring proactive searching, LDATree and FoLDTree perform better compared to the random forest and other single-tree methods.

\subsection{Real Data Analysis}
\label{subsec:realData}

Here, we evaluate the models on nine real datasets from the UC Irvine Machine Learning Repository \cite{UCI_ML_Repository}. We conducted the experiment 20 times for each dataset, randomly splitting the data into training and testing sets with a 70:30 ratio each time. For methods that cannot handle missing values, we impute the missing values using the simple imputation method mentioned in Section~\ref{subsec:NA}. We use testing accuracy as our main metric, and the results are summarized in Table~\ref{Table:realMethods}.\\

\begin{sidewaystable}
\centering
\small
\begin{tabular}{|l|l|l|l|l|l|l|l|l|l|l|}
\hline
Dataset             & \#S & \#F & \#L & CART          & FoLDTree      & GUIDE         & LDATree                & PPtree                & RF                     & Stree                  \\ \hline
SUPPORT2            & 9105      & 64         & 2        & 0.886 (0.002) & 0.935 (0.002) & 0.935 (0.004) & 0.926 (0.004)          & 0.858 (0.002)         & \textbf{0.939 (0.001)} & 0.588 (0.003)          \\ \hline
Polish Bankruptcy   & 4182      & 64         & 2        & 0.716 (0.006) & 0.78 (0.006)  & 0.791 (0.009) & 0.821 (0.006)          & 0.594 (0.01)          & \textbf{0.837 (0.004)} & 0.689 (0.006)          \\ \hline
Mice Protein        & 1080      & 77         & 8        & 0.726 (0.009) & 0.987 (0.004) & 0.797 (0.023) & 0.986 (0.004)          & 0.988 (0.003)         & \textbf{0.99 (0.003)}  & 0.981 (0.005)          \\ \hline
Haberman Survival   & 306       & 3          & 2        & 0.716 (0.017) & 0.726 (0.018) & 0.717 (0.015) & 0.728 (0.018)          & \textbf{0.74 (0.018)} & 0.728 (0.015)          & 0.736 (0.022)          \\ \hline
Balance Scale       & 625       & 4          & 3        & 0.79 (0.01)   & 0.881 (0.009) & 0.873 (0.012) & \textbf{0.907 (0.009)} & 0.819 (0.014)         & 0.853 (0.012)          & 0.901 (0.008)          \\ \hline
South German Credit & 1000      & 53         & 4        & 0.492 (0.01)  & 0.491 (0.014) & 0.5 (0.009)   & 0.478 (0.012)          & 0.402 (0.012)         & \textbf{0.519 (0.011)} & 0.431 (0.012)          \\ \hline
Dry Bean            & 13611     & 16         & 7        & 0.872 (0.002) & 0.921 (0.001) & 0.905 (0.003) & 0.921 (0.002)          & 0.898 (0.002)         & 0.924 (0.002)          & \textbf{0.924 (0.001)} \\ \hline
NATICUSdroid        & 29332     & 86         & 2        & 0.928 (0.001) & 0.964 (0.001) & 0.942 (0.002) & 0.962 (0.001)          & 0.948 (0.001)         & \textbf{0.969 (0.001)} & 0.958 (0.001)          \\ \hline
Breast Cancer WI    & 569       & 30         & 2        & 0.928 (0.007) & 0.951 (0.008) & 0.947 (0.008) & 0.946 (0.007)          & 0.95 (0.007)          & 0.958 (0.007)          & \textbf{0.968 (0.006)} \\ \hline
Average Accuracy    &           &            &          & 0.784 (0.003) & 0.848 (0.003) & 0.823 (0.004) & 0.853 (0.003)          & 0.8 (0.003)           & \textbf{0.857 (0.003)} & 0.797 (0.003)          \\ \hline
Average Rank        &           &            &          & 6.11 (0.32)   & 3.44 (0.2)    & 4.33 (0.37)   & 3.56 (0.34)            & 4.89 (0.48)           & \textbf{1.89 (0.3)}    & 3.78 (0.52)            \\ \hline
\end{tabular}
\caption{Testing accuracy results for all the algorithms and datasets (Section \ref{subsec:realData}). Two standard deviations are noted in parentheses. The second to fourth columns represent the number of samples, the number of features, and the number of levels of the response variable. The last two rows show the average testing accuracies and average ranks calculated from the nine datasets above.}
\label{Table:realMethods}
\end{sidewaystable}

The best method with respect to testing accuracy is \texttt{RF}, followed by \texttt{LDATree} and \texttt{FoLDTree}. The two proposed methods demonstrate consistently reliable performance across all datasets. When calculating the confidence interval for the average testing accuracy, \texttt{LDATree}'s interval overlaps with \texttt{RF}'s, and \texttt{FoLDTree}'s overlaps with \texttt{LDATree}'s. In contrast, the other four methods perform poorly on at least one dataset. Looking at the rank of these methods, \texttt{RF} ranks highest, followed by \texttt{FoLDTree} and \texttt{LDATree}. These results demonstrate the superiority of the proposed methods over \texttt{Stree} and \texttt{PPtree}, with performance nearly matching that of \texttt{RF}, which is acceptable given that the proposed method consists of only one tree. Notably, the features in these datasets are likely to be manually selected, which may explain why \texttt{FoLDTree} does not significantly outperform \texttt{LDATree}, as there are few noise variables.

\section{Conclusion}
\label{sec:Conclusion}

In this paper, we present two new decision tree frameworks that use ULDA and forward ULDA to find splits. These frameworks can effectively generate oblique splits, handle missing values, perform feature selection, and output predicted class labels and class probabilities. Based on our real data analysis, they generally outperform traditional decision trees with orthogonal splits, such as CART, and other oblique trees like PPtree and Stree, while achieving performance comparable to that of the random forest. We also identify two specific use cases for the proposed method: when the dataset contains many noise variables, and when significant high-order interactions are present alongside non-significant low-order interactions. In these scenarios, the proposed method outperforms other methods, including the random forest.\\

One potential future direction relates to how splits are generated. Currently, when the LDA split is not effective, it will serve as a random split. In simulations, we observe that it sometimes takes three or more splits for LDATree to approximate complex patterns, such as nonlinear decision boundaries or symmetrical patterns. This is not very efficient. In the future, we could allow it to fall back to a univariate split when no effective ULDA split is found. Additionally, we could incorporate insights from classifiers sensitive to decision boundaries, such as SVM.\\

Overall, this paper presents a novel approach to integrating LDA into the decision tree framework, addressing several bottlenecks from previous attempts in the literature. We encourage researchers to further explore this direction, such as by developing an ensemble version of LDATree. The related R package, \texttt{LDATree}, is available on CRAN.

\clearpage 

\bibliographystyle{plainnat}
\bibliography{papers}

\end{document}